\def\eqref#1{equation~\ref{#1}}
\def\1{\bm{1}}
\DeclareMathAlphabet{\mathsfit}{\encodingdefault}{\sfdefault}{m}{sl}
\SetMathAlphabet{\mathsfit}{bold}{\encodingdefault}{\sfdefault}{bx}{n}
\def\gA{{\mathcal{A}}}
\def\gD{{\mathcal{D}}}
\def\gV{{\mathcal{V}}}
\def\gX{{\mathcal{X}}}
\def\gY{{\mathcal{Y}}}
\def\gZ{{\mathcal{Z}}}
\def\sN{{\mathbb{N}}}
\def\sR{{\mathbb{R}}}
\newcommand{\E}{\mathbb{E}}
\DeclareMathOperator*{\argmax}{arg\,max}
\DeclareMathOperator{\Tr}{Tr}
\definecolor{mydarkred}{rgb}{0.6,0,0}
\definecolor{mydarkgreen}{rgb}{0,0.6,0}
\newtheorem{theorem*}{Theorem}
\newtheorem{corollary}{Corollary}
\newtheorem{lemma}{Lemma}
\newtheorem{Definition}{Definition}
\title{Attribute-based Visual Reprogramming \\ for Vision-Language Models}
\author{
  Chengyi Cai$^{1}$ \quad Zesheng Ye$^{1}$ \quad Lei Feng$^{2,3}$ \quad Jianzhong Qi$^{1}$ \quad Feng Liu$^{1}$\thanks{Correspondence to Feng Liu (fengliu.ml@gmail.com)} \\
  $^1$The University of Melbourne \quad$^2$Southeast University \quad$^3$Idealism Technology (Beijing)\\
  \texttt{\{chengyi.cai1,zesheng.ye,jianzhong.qi\}@unimelb.edu.au} \\
  \texttt{lfengqaq@gmail.com} \quad
  \texttt{fengliu.ml@gmail.com}
}
\begin{document}

\maketitle

\begin{abstract}
\textit{Visual reprogramming} (VR) reuses pre-trained vision models for downstream image classification tasks by adding trainable noise patterns to inputs.
When applied to vision-language models (e.g., CLIP), existing VR approaches follow the same pipeline used in vision models (e.g., ResNet, ViT), where ground-truth class labels are inserted into fixed text templates to guide the optimization of VR patterns.
This label-based approach, however, overlooks the rich information and diverse attribute-guided textual representations that CLIP can exploit, which may lead to the misclassification of samples. 
In this paper, we propose\textit{ \textbf{Attr}ibute-based \textbf{V}isual \textbf{R}eprogramming} (AttrVR) for CLIP, utilizing \textit{\textbf{des}criptive \textbf{attr}ibutes} (DesAttrs) and \textit{\textbf{dist}inctive \textbf{attr}ibutes} (DistAttrs), which respectively represent common and unique feature descriptions for different classes.
Besides, as images of the same class may reflect different attributes after VR, AttrVR iteratively refines patterns using the $k$-nearest DesAttrs and DistAttrs for each image sample, enabling more dynamic and sample-specific optimization. 
Theoretically, AttrVR is shown to reduce intra-class variance and increase inter-class separation. Empirically, it achieves superior performance in 12 downstream tasks for both ViT-based and ResNet-based CLIP. The success of AttrVR facilitates more effective integration of VR from unimodal vision models into vision-language models. Our code is available at \url{https://github.com/tmlr-group/AttrVR}.
\end{abstract}

\section{Introduction}

Recent studies \citep{xutowards,chenunderstanding,wangconnecting} have demonstrated that downstream tasks can be efficiently addressed by repurposing pre-trained models from data-rich domains. 
For repurposing pre-trained image classifiers with a fixed label space (e.g., pre-trained ResNet \citep{he2016deep}, ViT \citep{dosovitskiy2020image}), \textit{visual reprogramming} (VR) \citep{cai2024sample,chen2023understanding,chen2024model}, also known as adversarial reprogramming \citep{tsai2020transfer,elsayedadversarial}, is a model-agnostic technique that adjusts the input space while preserving the original models.
VR (full problem setup detailed in Appendix~\ref{app:probset}) trains additive noise patterns on images using downstream samples and their corresponding labels. Recently, VR has been extended to \textit{vision-language models} (VLMs), such as CLIP \citep{radford2021learning}, for downstream image classification. Existing implementations of VR for CLIP \citep{oh2023blackvip,bahng2022exploring} also follow the pipeline of vision models (i.e, image classifiers), relying on \textit{template-prompted ground-truth labels} (e.g., `This is a photo of [label]') to train the noise patterns.

\begin{figure}[!h]
    \centering
    \includegraphics[width=0.95\linewidth]{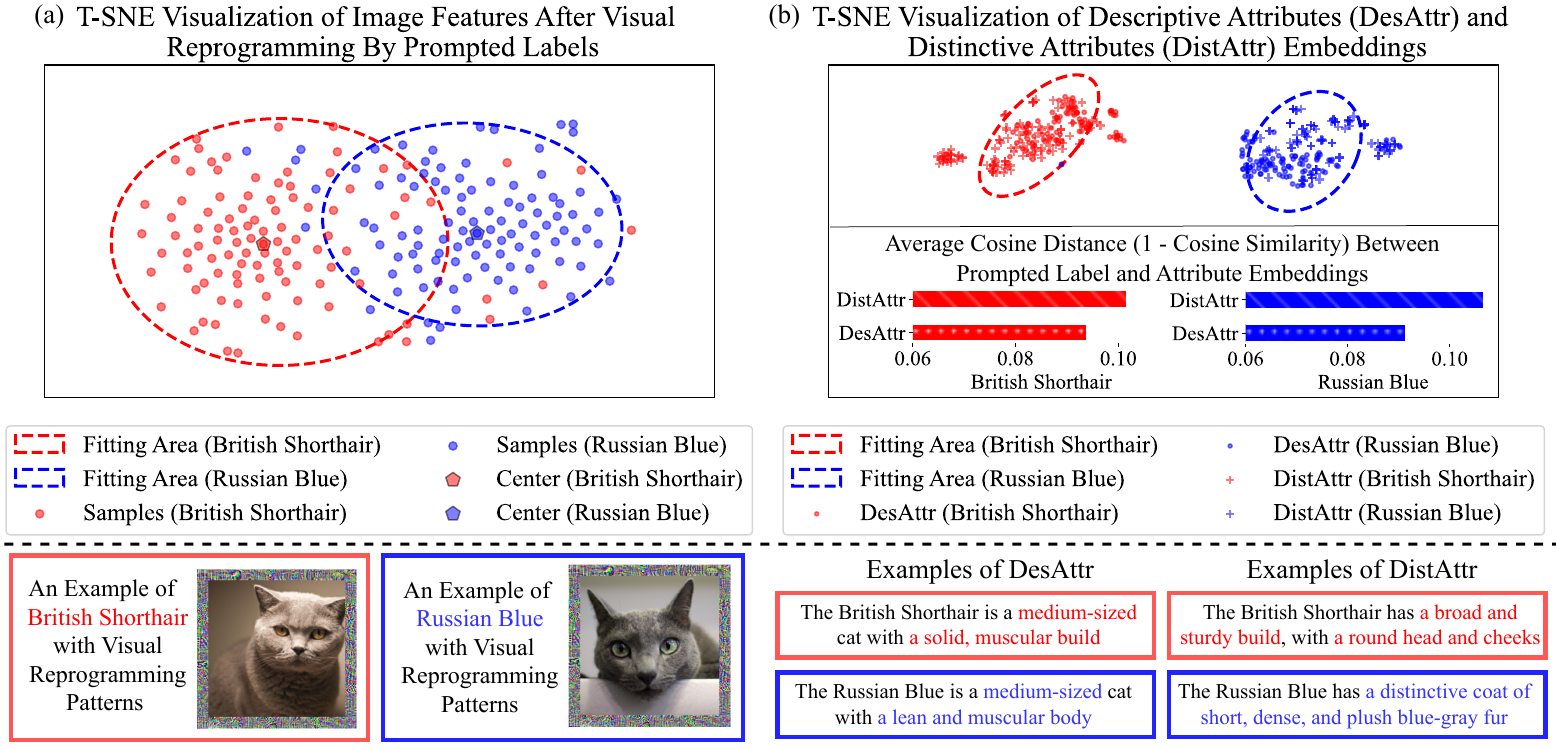}
    \caption{T-SNE visualization results of (a) embeddings of images with label-based (i.e., `This is a photo of [label]') VR and (b) embeddings of text DesAttrs and DistAttrs for classes `British Shorthair' and `Russia Blue'. Examples of images with VR or attributes are shown below. Misclassifications occur in images with label-based VR, whereas attributes are easily distinguishable. }
    \label{fig:motivation}
    \vspace{-1.5cm}
\end{figure}

However, VLMs are intrinsically different from unimodal vision classifiers in their capability to align attribute descriptions with image embeddings. Using label-based VR methods might fail to fully leverage such capability. Besides, similar syntactic structures in \textit{template-prompted ground-truth labels} imply approximate text embeddings, leading to misclassifications of samples. Figure~\ref{fig:motivation}(a) shows the t-SNE \citep{van2008visualizing} embedding visualization results (the upper plot) of images with VR patterns (the lower plot) learned by template-prompted labels. Classes `British Shorthair' and `Russia Blue' from the OxfordPets \citep{parkhi2012cats} dataset are used as examples. Many samples are observed to have similar distances to the cluster centers of both classes, making them prone to misclassification. In contrast, Figure \ref{fig:motivation}(b) shows the text embeddings visualization results of attributes generated by \textit{large language model} (LLM) GPT-3.5 \citep{brown2020language} given the class name `British Shorthair' and `Russia Blue', respectively. The text embeddings of attributes exhibit greater distinguishability compared to the embeddings of images with label-based approaches. Further, we use \textit{Descriptive attributes} (DesAttrs) marked with `.' to denote the common characteristics of certain classes (with examples shown in the figure), and \textit{distinctive attributes} (DistAttrs) marked with `+' to describe features that differentiate the class from others or exhibit individual differences. An important observation is that DesAttrs primarily concentrate near the cluster centers, while DistAttrs tend to be located away from the non-intended classes, e.g., DistAttrs of the red cluster are away from the blue cluster, especially the bottom left ones. The same is observed at the top-right corner of the blue cluster. This is also confirmed by the cosine distance: DistAttrs embeddings are farther away from the prompted labels than DesAttrs (Figure~\ref{fig:motivation}(b)).

Such observations suggest that guiding VR training with DesAttrs and DistAttrs could improve classification accuracy compared with {\it template-prompted ground-truth labels}.
In Section \ref{sec:method}, we formalize DesAttrs and DistAttrs (Definitions \ref{def:Des} and \ref{def:Dist}) and propose \textit{Attribute-based Visual Reprogramming} (AttrVR), which harnesses the attribute-querying ability of LLMs to describe DesAttrs and DistAttrs, capturing \emph{multiple} common and unique features for each downstream class.
Moreover, as images of the same class may reflect different attributes with evolving VR patterns, AttrVR queries the $k$-nearest DesAttrs and DistAttrs for individual image samples at each training epoch.
By iteratively updating the VR patterns with {\it sample-specific attributes}, AttrVR fosters more context-aware image-attribute alignment and mitigates the ambiguity caused by {\it fixed template-prompted labels}.

In Section \ref{sec:theory}, we further establish that guiding the representation learning with DesAttrs and DistAttrs reduces intra-class variation and increases inter-class separation of image representations. 
This yields a more discriminative embedding space, thereby facilitating classification performance.

Experiments conducted on 12 widely-used benchmarks demonstrate the effectiveness of AttrVR in Section \ref{sec:exp}. AttrVR consistently outperforms other VR methods when using different encoder backbones or fewer training samples. Visualizations of the embedding space and individual samples with their top-matched attributes also substantiate the efficacy of AttrVR. Additional ablation, hyper-parameter (see Section \ref{sec:exp}) and aggregation studies (see Appendix \ref{app:exp_aggr}) further examine the contributions of different components within AttrVR.

Overall, both theoretical analysis and experimental results demonstrate that AttrVR has a clear advantage over label-based VR when applying CLIP to downstream classification tasks. The introduction of AttrVR represents a meaningful step towards adapting VR from repurposing single-modal pre-trained models with predefined label space to multimodal models (i.e., VLMs) for classification.

\section{Related Works}
\textbf{Prompting in Classification.}
Prompt learning \citep{jia2022visual,bahng2022exploring,oh2023blackvip} enables efficient adaptation of large pre-trained models to specific downstream tasks without fully finetuning the original models. Prompts can be trainable parameters integrated into different regions of pre-trained models. For vision models like ViT \citep{dosovitskiy2020image}, VPT \citep{jia2022visual} incorporates prompts in conjunction with the embeddings of input patches in each layer. EEVPT \citep{han20232vpt} and TransHP \citep{wang2023transhp} improve VPT by adding prompts within self-attention layers or learning prompt tokens for encoding coarse image categories. 

For Vision-Language Models (VLMs) such as CLIP \citep{radford2021learning}, prompt learning methods are typically based on few-shot samples from downstream tasks. CoOP \citep{zhou2022learning} optimizes the text prompts focusing on the text encoder part of CLIP, while CoCoOP \citep{zhou2022conditional} further improves it by conditioning text prompts on input images. Beyond text prompts, MaPLe \citep{khattak2023maple} develops layer-specific mapping functions to connect visual and text prompts. PromptKD \citep{li2024promptkd} appends a projection to the image encoder and employs knowledge distillation to train the prompts.

\textbf{Model Reprogramming and Input VR.}
In contrast to prompting methods that introduce parameters within the model, model reprogramming \citep{chen2024model} modifies the input and output spaces of downstream tasks. Therefore, it does not require meticulous design of parameter placement and is compatible with any model architecture. It has been applied in repurposing language \citep{hambardzumyan2021warp,vinod2020reprogramming}, graph \citep{jing2023deep}, vision \citep{tsai2020transfer,chen2023understanding,cai2024bayesian} and acoustic models \citep{yang2021voice2series,yang2023english,hung2023low}.

Input VR refers to methods that add trainable noise patterns to images to repurpose pre-trained models, being model-agnostic and preserving the original model parameters. The differences between input VR, visual prompting and finetuning are outlined in Appendix \ref{app:probset}. Recent work on unimodal vision classifiers adds trainable noise that overlays resized images \citep{cai2024sample} or pads around \citep{elsayedadversarial,tsai2020transfer,chen2023understanding} images, and then optimizes noise patterns using ground-truth labels. When applying VR to VLMs \citep{chen2023understanding,bahng2022exploring,oh2023blackvip}, template-prompted ground-truth labels are used to train the noise patterns.

\textbf{Visual Attribute Query.} Visual Attribute Query \citep{pratt2023does} refers to querying an LLM to obtain the corresponding visual features given downstream task labels. Current studies improve the zero-shot generalization performance \citep{pratt2023does,menonvisual,li2024visual} and machine learning interpretability \citep{yang2023language,yan2023learning} for VLMs.

\cite{pratt2023does} and \cite{menonvisual} utilize GPT-3 \citep{brown2020language} to generate descriptions of downstream task labels, thereby enhancing zero-shot classification accuracy. LaBo \citep{yang2023language} extends this approach by generating thousands of candidate concepts and constructing a class-concept weight matrix. To address the impact of redundancy in attribute descriptions, \cite{yan2023learning} learn a concise set of attributes, \cite{tian2024argue} introduce attribute sampling and proposes class-agnostic negative prompts, while WCA \citep{li2024visual} calculates the similarity between descriptions and local visual regions.

\section{Preliminaries}
\label{sec:prelim}
\textbf{CLIP-based Classification.} CLIP~\citep{radford2021learning} is a pre-trained VLM with an image encoder $f_{\rm img}: \gX^{\rm S} \to \gZ$ and a text encoder $f_{\rm txt}: \gV \to \gZ$, where $\gX^{\rm S} \subseteq \sR^{d_{\rm S}}$ is a $d_{\rm S}$-dimensional image space, and $\gV$ is the text space.
These encoders map an image $X^{\rm S} \in \gX^{\rm S}$ and a text description $V \in \gV$, into a shared embedding space $\gZ \subseteq \sR^{d}$.
Then, the embedding similarity score between the image and the text description is calculated as
{\footnotesize \begin{equation}\label{eq:clip_score}
    \text{sim}_{\rm CLIP}(X^{\rm S}, V) = \text{cos}\left(Z_{\rm img}, Z_{\rm txt} \right) / \tau, \text{ with } Z_{\rm img} = f_{\rm img}(X^{\rm S}), \text{ and } Z_{\rm txt} = f_{\rm txt}(V),
\end{equation}}

where $\text{cos}(\cdot, \cdot)$ denotes cosine similarity and $\tau$ is a temperature parameter.
Upon pre-training, CLIP can align semantically similar image-text pairs by maximizing their embedding similarity scores.

When it comes to a downstream classification task defined over $\gX^{\rm T} \times \gY^{\rm T}$, where $\gX^{\rm T} \subseteq \sR^{d_{\rm T}}$ is a $d_{\rm T}$-dimensional image space and $\gY^{\rm T}$ is the label space of the downstream task,
CLIP employs label prompting. For example, given the downstream label variable $Y^{\rm T} \in \mathcal{Y}^{\rm T}$, $\texttt{TP}(Y^{\rm T}) \triangleq \text{``This is a photo of''} \, \|\, Y^{\rm T}$, where $\|$ denotes concatenation, is commonly used to map a label $y^{\rm T} \in \gY^{\rm T}$ into a text description.
Following, CLIP leverages the pre-trained visual-text alignment capability and assigns a label to $X^{\rm T}$ by selecting the most similar $\texttt{TP}(Y^{\rm T} = y^{\rm T})$ in the embedding space $\gZ$.
Then, for a shape-compatible image $x^{\rm T}$, i.e., $d_{\rm T} = d_{\rm S}$, label prediction follows $\argmax_{Y^{\rm T} \in \gY^{\rm T}} p_\text{CLIP}(Y^{\rm T} \mid X^{\rm T})$ with a normalized conditional probability:
{\footnotesize \begin{equation}\label{eq:clip_prob}
    p_{\rm CLIP}(Y^{\rm T} = y^{\rm T} \mid X^{\rm T} = x^{\rm T}) = \frac{ \exp \left( \text{sim}_{\rm CLIP}(x^{\rm T}, \texttt{TP}(y^{\rm T})) \right)}{\sum_{y^{\prime} \in \gY^{\rm T}} \exp \left( \text{sim}_{\rm CLIP}(x^{\rm T}, \texttt{TP}(y^{\prime})) \right)}.
\end{equation}}\textbf{Input VR for CLIP-based Classification}.
Input VR~\citep{cai2024sample} extends the applicability of frozen pre-trained models~(e.g., CLIP) to downstream tasks with mismatched input shape, i.e., $d_{\rm T} \neq d_{\rm S}$.
It introduces a learnable input transform $f_{\rm in}: \sR^{d_{\rm T}} \to \sR^{d_{\rm S}}$ defined as $f_{\rm in}(X^{\rm T} | \delta) \triangleq \texttt{Pad}(X^{\rm T}) + \delta \odot M$, where $\texttt{Pad}(\cdot)$ zero-pads around the input image and $\delta \in \sR^{d_{\rm S}}$ are trainable parameters. 
$M$ is a binary mask with `$0$'s in the area $X^{\rm T}$ is located and `$1$'s in the padding area.
The Hadamard product $\odot$ ensures that $\delta$ only affects the padded regions.
Thus, the transformed image is given by:
{\footnotesize \begin{equation}\label{eq:inputvr}
    \tilde{X}^{\rm T} = f_{\rm in}(X^{\rm T}|\delta) \in \sR^{d_{\rm S}},
\end{equation}}

which allows CLIP to process inputs from $\gX^{\rm T}$ by embedding them in $\gX^{\rm S}$ with learned contextual information.
With $\texttt{TP}(Y^{\rm T})$ that maps class label to a text description, the VR-adapted CLIP prediction $p_{\rm vr}(Y^{\rm T} | X^{\rm T}) \propto \exp (\text{sim}_{\rm CLIP}(\tilde{X}^{\rm T}, \texttt{TP}(Y^{\rm T})|\delta))$ essentially follows Eq.~(\ref{eq:clip_prob}) but is adapted for transformed input images.
Given a downstream dataset $\gD^{\rm T} = \{ (x^{\rm T}_{i}, y^{\rm T}_{i}) \}_{i=1}^{N} \mathop{\sim}\limits^{{\rm i.i.d}} \gX^{\rm T} \times \gY^{\rm T}$, where $N = n \times |\gY^{\rm T}|$ with $n$ samples per class, the optimization of VR pattern $\delta$ is driven by the cross-entropy loss, such that
{\footnotesize \begin{equation} \label{eq:lbvr}
    \delta^* = \arg \min_{\delta} - \frac{1}{N} \sum\nolimits_{i=1}^N \left[ \log p_{\rm vr}(Y^{\rm T} = y_i^{\rm T} | X^{\rm T} = x_i^{\rm T}) \right].
\end{equation}}\textbf{Limitation of }$\texttt{TP}(Y^{\rm T})$.
Eq.~(\ref{eq:lbvr}) implies that the optimization of $\delta$ exclusively relies on $\texttt{TP}(Y^{\rm T})$ for supervision.
However, as two labels $y_p, y_q \in \mathcal{Y}^{\rm T}$ share similar syntactic structures in their fixed template-based $\texttt{TP}(Y^{\rm T}=y_p)$ and $\texttt{TP}(Y^{\rm T}=y_q)$, the image-class embedding similarity scores, $\text{sim}_{\rm CLIP}(\tilde{x}^{\rm T}, \texttt{TP}(y_{p}))$ and $\text{sim}_{\rm CLIP}(\tilde{x}^{\rm T}, \texttt{TP}(y_{q}))$, may differ only slightly for the same input $x^{\rm T}$.
This marginal gap in similarity scores heightens the misclassification risks, particularly in few-shot settings, where the small sample size exacerbates the challenge of resolving ambiguities between closely related text prompts.
Figure \ref{fig:motivation}(a) illustrates this concern, highlighting the potential classification errors due to the limited discriminative power of the text prompts.

\section{Attribute-based Visual Reprogramming}\label{sec:method}

\textbf{Describing Classes with Attributes}.
The aforementioned limitation calls for more {\it informative} and {\it discriminative} information of label beyond $\texttt{TP}(Y^{\rm T})$. 
Motivated by Figure \ref{fig:motivation}(b), we leverage visual attributes \citep{ferrari2007learning} to capture more fine-grained visual features specific to each class than label-only representations.
We thus propose attribute-based VR (AttrVR) that substitutes {\it template-prompted ground-truth labels} with {\it descriptive} and {\it distinctive} attributes, based on CLIP's pre-trained visual-text alignment capability.
AttrVR aligns image representations with fine-grained attributes-based representations of classes that more effectively distinguish different classes than template-prompted labels.
We begin by formalizing relevant concepts used in AttrVR.
\begin{Definition}[Attributes]\label{def: Attr}
    Let $\gX$ be the input space of images, $\gY$ be the set of class labels, and $\gA$ be the universal set of all possible attributes (e.g., tall plant, red color).
    Define a mapping $f_{m}$ from a class label $y$ to a set of attributes $\gA(y)$.
    For each attribute $a \in \gA$, define an indicator function $f_a: \gX \to \{0, 1\}$, such that $f_a(x) = 1$ if attribute $a$ is identified in input $x$ based on a specified similarity criterion\footnote{The criterion can vary based on different contexts~\citep{kumar2011describable, pham2021learning}. In this study, we focus on $\texttt{CLIP}$ similarity in the embedding space $\gZ$ induced by VLM, which will be elaborated on in Section~\ref{sec:method}.
    }, and $0$ otherwise.
    Then, for any class $y \in \gY$, the set of attributes $\gA(y)$ is connected to images by the features that characterize samples belonging to that class.
\end{Definition}
To further characterize the attributes most relevant for class description and distinction, we introduce two subsets from $\gA(y)$, namely {\it descriptive attributes}~(DesAttrs) and {\it distinctive attributes}~(DistAttrs).
DesAttrs refer to the \textit{most common}  visual features across multiple samples belonging to the same class, describing the class by capturing its general characteristics.
\begin{Definition}[Descriptive Attributes]\label{def:Des}
    For a class $y \in \gY$ and a set cardinality $m \in \sN_+$, DesAttrs are defined as the $m$ most frequently identified attributes from the samples within class $y$,
    {\footnotesize \begin{equation}
        \gA_{\rm des}(y) \triangleq \left\{ a_i \in \gA(y) \mid  U_y(a_i) \geq U_y(a), \forall i \in \{1, \dots, m\}, \forall a \in \gA(y) \setminus \{a_1, \dots, a_i \} \right\},
    \end{equation}}
    where $\gX_y$ denotes the set of all images in class $y$, and $U_y(a) = \sum_{x \in \gX_y} f_a(x) / |\gX_y|$ is the frequency of attribute $a \in \gA$ in class $y \in \gY$. $U_y(a)$ can be used to rank the $m$ highest-frequency attributes.
\end{Definition}
In contrast, DistAttrs are visual features that distinguish a class from other classes, appearing in the class while being the \textit{least common} in the other classes.
\begin{Definition}[Distinctive Attributes]\label{def:Dist}
    For a class $y \in \gY$ and a set cardinality $m \in \sN_+$, DistAttrs are defined as the $m$ attributes that are most uniquely associated with class $y$,
    {\footnotesize \begin{equation}
        \gA_{\rm dist}(y) \triangleq \left\{ a_i \in \gA(y) \mid  V_y(a_i) \geq V_y(a), \forall i \in \{1, \dots, m\}, \forall a \in \gA(y) \setminus \{a_1, \dots, a_i \} \right\},
    \end{equation}}
    where $V_y(a) = 1 - (\sum_{y^{\prime} \in \gY \setminus \{ y\}} \sum_{x \in \gX_{y^{\prime}}} f_a(x) / (|\gX| - |\gX_y|) )$ calculates the presence of an attribute $a \in \gA$ in samples of class $y \in \gY$ against its presence in all other classes $y^{\prime} \in \gY \setminus \{ y \}$.
\end{Definition}
Intuitively, describing $\gA_{\rm des}(y) \cup \gA_{\rm dist}(y)$ leads to more information of $y$ than relying on $\texttt{TP}(y)$.

\begin{figure}[t]
    \centering
    \includegraphics[width=\linewidth]{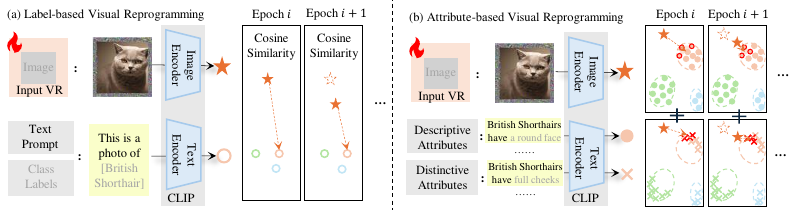}
    \caption{The comparison of (a) previous label-based VR and (b) our attribute-based VR. Previous VR methods use fixed template-prompted ground-truth labels for all samples to optimize the VR pattern $\delta$ (using Eq.~(\ref{eq:clip_prob}) and Eq.~(\ref{eq:inputvr})), whereas our method iteratively selects $k$ nearest DesAttrs and DistAttrs for individual samples in each epoch to optimize the VR pattern $\delta$ (using Eq.~(\ref{eq:weighted_clip})).}
    \label{fig:pipeline}
\end{figure}
\textbf{Method Overview.}
For downstream image classification, AttrVR follows the general input VR pipeline by optimizing the padded VR noise pattern $\delta$ over a dataset $\mathcal{D}^{\rm T} = \{ (x^{\rm T}_{i}, y^{\rm T}_{i}) \}_{i=1}^{N}$ drawn from $\gX^{\rm T} \times \gY^{\rm T}$, as introduced in Section~\ref{sec:prelim}.
Yet, it diverges from label-based VR approaches in two key strategies (see Figure~\ref{fig:pipeline}).
First, for each label $y^{\rm T} \in \gY^{\rm T}$, AttrVR replaces previously used text prompts $\texttt{TP}(Y^{\rm T}=y^{\rm T})$, adopts DesAttrs (Definition~\ref{def:Des}) and DistAttrs (Definition~\ref{def:Dist}) that describe common and unique attributes of $y^{\rm T}$ as the supervision signal.
Second, AttrVR employs a $k$-nearest neighbor iterative updating strategy to ensure that attribute assignments are continuously refined, allowing the most relevant attributes for each sample to adapt dynamically as the trainable noise $\delta$ evolves across epochs. 
The detailed strategies are elaborated on below.

\textbf{Generating Attributes with LLMs.}
The concept of attributes (Definition~\ref{def: Attr}) is built upon $f_m$ that maps class labels to subsets of $\gA$.
However, $f_m$ is intractable due to the exponential growth of the possible number of attributes, making direct computation and storage of all possible attribute combinations impractical.
Moreover, manually defining attributes for each class is also infeasible in complex domains where attributes may not be easily enumerated or predefined.
To this end, we use powerful LLMs with {\it visual attribute query} capabilities~\citep{pratt2023does}, denoted by $f_{\rm LLM}(Y^{\rm T})$, as a tractable surrogate for implementing $f_m(Y^{\rm T})$.
LLMs can infer relevant attributes for any class with context-driven queries, bypassing the need to compute the entire power set of $\gA$ -- they generate $\gA_{\rm des}(Y^{\rm T} = y^{\rm T})$ and $\gA_{\rm dist}(Y^{\rm T} = y^{\rm T})$ according to different downstream tasks and class labels.

Concretely, we adopt GPT-3.5 \citep{brown2020language} to generate $\gA_{\rm des}(y^{\rm T})$ and $\gA_{\rm dist}(y^{\rm T})$ each containing $m$ attributes, by prompting the LLM with task-specific and class-specific queries, formulated as
{\footnotesize \begin{equation}\label{eq:attr_gen}
        \tilde{\gA}_{\rm des}(y^{\rm T}) = f_{\rm LLM}(y^{\rm T} | \texttt{[des\_prompt]}), \tilde{\gA}_{\rm dist}(y^{\rm T}) = f_{\rm LLM}(y^{\rm T} | \texttt{[dist\_prompt]}).
\end{equation}}

As a result, we collect $2m$ attributes for each class $y^{\rm T} \in \gY^{\rm T}$, which will be used for optimizing $\delta$.
The details of attribute generation (prompts, settings, etc.) are in Appendix~\ref{app:attr_gen}.

\textbf{$k$-nearest Iterative Updating Strategy.}
Recall that CLIP-based image classification is performed upon ranking the image-text embedding similarity scores.
However, the most similar attribute descriptions even from the same attribute set may vary between: (1) different images of the same class, i.e., inconsistencies of visual features among different samples, and (2) the same image with evolving VR patterns, i.e., changes in $\delta$ during training, leading to potential misalignment between image and relevant attributes.
In response, we propose \textit{k-nearest neighbor} attribute query to reduce the sensitivity to individual attributes for addressing (1) and employ an \textit{iterative updating strategy} to adapt to changing VR patterns as a workaround for (2).

Specifically, consider the training dataset $\gD^{\rm T}$ of the downstream task.
For each downstream image $x_i^{\rm T}$, we first obtain its transformation $\tilde{x}_i^{\rm T}$ ({{\it cf}. Eq.~(\ref{eq:inputvr})).
Then, we identify sample-specific $k$-nearest DesAttrs for $x_i^{\rm T}$ by computing the CLIP embedding similarity between $\tilde{x}_i^{\rm T}$ and all attributes from the LLM-generated DesAttrs $\tilde{\gA}_{\rm des}(y^{\rm T})$, ranked in descending order of similarity, such that
{\footnotesize \begin{equation}
    \begin{aligned}
        \tilde{\gA}_{\rm des}^{k} (x_i^{\rm T}, y^{\rm T} | \delta^{(e)}) = \left\{ a_j \right\}_{j=1}^{k} : \text{sim}_{\rm CLIP}(x_i^T, a_j | \delta^{(e)}) & > \text{sim}_{\rm CLIP}(x_i^T, a | \delta^{(e)}), \\
        & \; \forall a \in \tilde{\gA}_{\rm des}(y^{\rm T}) \setminus \{ a_1, \dots, a_{j-1} \}.
    \end{aligned}
    \label{eq:knearest}
\end{equation}}

Here, $\delta^{(e)}$ refers to the VR pattern in the training epoch $e$.
Similarly, the sample-specific $k$-nearest DistAttrs $\tilde{\gA}_{\rm dist}^{k} (x_i^{\rm T}, y^{\rm T} | \delta^{(e)})$ can be obtained in the same manner.

Then, the attribute-based embedding similarity score between $x_i^{\rm T}$ and $\forall y^{\rm T} \in \mathcal{Y}^{\rm T}$, which incorporates its both sample-specific $\tilde{\gA}_{\rm des}^{k}\triangleq\tilde{\gA}_{\rm des}^{k} (x_i^{\rm T}, y^{\rm T} | \delta^{(e)})$ and $\tilde{\gA}_{\rm dist}^{k}\triangleq\tilde{\gA}_{\rm dist}^{k} (x_i^{\rm T}, y^{\rm T} | \delta^{(e)})$, is computed by a weighted aggregation:
{\footnotesize \begin{equation}\label{eq:weighted_clip}
    \text{sim}_{\rm Attr}(x_i^{\rm T}, y^{\rm T} | \delta^{(e)}) = \frac{\lambda}{k} \sum_{a \in \tilde{\gA}_{\rm des}^{k}} \text{sim}_{\rm CLIP}(\tilde{x}_i^{\rm T}, a | \delta^{(e)}) + \frac{1 - \lambda}{k} \sum_{a^{\prime} \in \tilde{\gA}_{\rm dist}^{k}} \text{sim}_{\rm CLIP}(\tilde{x}_i^{\rm T}, a^{\prime} | \delta^{(e)}),
\end{equation}}

where $\lambda \in [0, 1]$ balances the contribution of DesAttrs and DistAttrs.
Then, the predictive probability $p_{\rm vr}(Y^{\rm T}=y_i^{\rm T} | X^{\rm T}=x_i^{\rm T})$ is determined for each sample $x_i^{\rm T}$ with a ${\rm softmax}(\cdot)$ resembling Eq.~(\ref{eq:clip_prob}), but now with a new attribute-based embedding similarity score $\text{sim}_{\rm Attr}(x_i^{\rm T}, y^{\rm T} | \delta^{(e)})$ evaluated at each epoch $e$.
We iteratively update the VR pattern, optimizing parameters $\delta^{(e + 1)} \gets \delta^{(e)} - \alpha 
 \nabla_{\delta}^{(e)}$ with respect to the cross-entropy loss~({\it cf}. Eq.~(\ref{eq:lbvr})) under learning rate $\alpha$, over the training dataset $\gD^{\rm T}$.

\begin{minipage}[t]{0.6\textwidth}
\vspace{-18pt}
    \begin{flushright}
        \begin{algorithm}[H] 
            \caption{Training Pipeline of AttrVR}
            \begin{algorithmic}[1]
            \STATE {\bfseries Input:} Few-shot training data $\mathcal{D}^{\rm T}=\{(x_i^{\rm T}, y_i^{\rm T})\}^{N}_{i=1}$, hyper-parameters $k, \lambda$, learning rate $\alpha$, epoch number $E$, and pre-trained CLIP model
            \STATE {\bfseries Output:} Trained VR pattern $\delta^{(E)}$ applying AttrVR
            \STATE \# Step 1: Calculate and Store Attribute Embeddings
            \FOR{$y \in \mathcal{Y^{\rm T}}$}
                    \STATE Obtain $\tilde{\gA}_{\rm des}(y)$ and $\tilde{\gA}_{\rm dist}(y)$ by Eq.~(\ref{eq:attr_gen})
                    \STATE Get $Z_{\rm txt}(a)$ for $\forall a \in \tilde{\gA}_{\rm des}(y) \cup \tilde{\gA}_{\rm dist}(y)$
            \ENDFOR
            \STATE \# Step 2: Begin Training the VR Pattern
            \STATE Initialize  $\delta^{(0)} \leftarrow \{0\}^{d_\mathcal{\rm S}}$
               \FOR{$e=0$ {\bfseries to} $E-1$}
                    \FOR{$i=1$ {\bfseries to} $N$}
                        \STATE Compute $\tilde{\gA}_{\rm des}^{k} (x_i^{\rm T}, y | \delta^{(e)}),\tilde{\gA}_{\rm dist}^{k} (x_i^{\rm T}, y| \delta^{(e)})$ by Eq.~(\ref{eq:knearest}) with Stored Embeddings for $\forall y \in \mathcal{Y^{\rm T}}$
                         \STATE Compute $p_{\rm vr}(y_i^{\rm T}|x_i^{\rm T})$ by Eq.~(\ref{eq:weighted_clip})
                    \ENDFOR
                    
                    \STATE $\delta^{(e + 1)} \gets \delta^{(e)} - \alpha\nabla_{\delta}^{(e)}$ \# Iterative Update
                \ENDFOR
            \end{algorithmic}
            \label{ag:attrvr}
        \end{algorithm}
    \end{flushright}
\end{minipage}
\hspace{0.02\textwidth}
\begin{minipage}[t]{0.38\textwidth}
\textbf{Comparison with Label-based VR}.

Besides using easily distinguishable attributes that replace previous template-prompted labels with similar syntactic structures to facilitate classification, AttrVR also aligns with the evolving nature of $\delta$.
In contrast to label-based VR that aligns images of the same class with a \emph{fixed} $\texttt{TP}(y^{\rm T})$, AttrVR re-queries $\tilde{\gA}_{\rm des}^{k}(x_i^T, y^{\rm T} | \delta^{(e)})$ and $\tilde{\gA}_{\rm dist}^{k}(x_i^T, y^{\rm T} | \delta^{(e)})$ for \emph{each image} at \emph{every epoch}.
This enables AttrVR to iteratively refine image-attribute alignment, yielding refined $\text{sim}_{\rm Attr}(x_i^{\rm T}, y^{\rm T} | \delta^{(e)})$ over epochs. In other words, while both label-based VR and AttrVR target the cross-entropy objective, AttrVR benefits from contextually relevant optimization with \emph{sample-specific} $k$-nearest attributes as supervision signals.
\end{minipage}

\textbf{Training Pipeline and Efficiency.} 
Algorithm \ref{ag:attrvr} outlines the pipeline of AttrVR. 
We note that the text embeddings of DesAttrs and DistAttrs are \emph{pre-computed before training}, introducing negligible computational overhead compared to label-based VR. See Appendix \ref{app:train_cost} for details.

\section{Understanding the Effects of Attributes}\label{sec:theory}
This section will justify why DesAttrs and DistAttrs would facilitate classification.
The ease of classification decision boundary depends on class separability \citep{lorena2019complex}, which quantifies how well different classes can be distinguished in the embedding space.
This measure is jointly determined by {\it intra-class variance} (i.e., the spread of embeddings within a class) and {\it inter-class distance} (i.e., the separation of embeddings from different classes).
\begin{Definition}[Class Separability]\label{def:CS}
    Let $\gX$ and $\gY$ be the input and class spaces as in Definition~\ref{def: Attr}.
    Let $\gZ$ be the image embedding space induced by the image encoder $Z_{\rm img}: \gX \to \gZ$. 
    For each class $y \in \gY$, let $\gX_y$ be the set of all images with label $y$, and let $\mu_y = \sum_{x \in \gX_y} z_{\rm img}(x) / |\gX_y|$ be the mean image embedding of class $y$.
    Then, class separability~(CS) {is defined as}:
    {\footnotesize \begin{equation}
    \notag
        \texttt{CS}(\gY; \gZ) =  - \frac{1}{|\gY|} \sum_{y \in \gY} \underbrace{\frac{1}{|\gX_y|} \sum_{x \in \gX_y} \left\| Z_{\rm img}(x) - \mu_y \right\|^2}_{ \Tr(\sigma^2(y)) \triangleq \text{ intra-class variation}} + \frac{1}{|\gY| (|\gY| - 1)} \sum_{y \neq y^{\prime}} \underbrace{\left\| \mu_y - \mu_{y^{\prime}} \right\|^2}_{d(y, y^{\prime}) \triangleq \text{ inter-class distance}},
        \vspace{-10pt}
    \end{equation}}
\end{Definition}
The value of $\texttt{CS}(\gY; \gZ)$ measures the difference of average {\it intra-class variance}, i.e., $\Tr(\sigma^2(y))$, and {\it inter-class distance}, i.e., $d(y, y^{\prime})$, across all classes.  
A higher value indicates the image embeddings are better separated in the embedding space.
Thus, the goal of maximizing class separability is equivalent to reducing {\it intra-class variation} while increasing {\it inter-class distance}.
\begin{lemma}\label{lem:Des}
    Let $\gA_{\rm des}(y) \subseteq \gA(y)$ be the set of descriptive attributes for class $y$ as with Definition~\ref{def:Des}.
    Let $\Sigma_{\rm A}$ and $\Sigma_{\rm L}$ be the covariance matrices of the embeddings optimized with respect to $\gA_{\rm des}(y)$ and $y$, respectively.
    Then, for any class $y \in \gY$, we have $\Tr \left( \Sigma_{\rm A} \left( y \right) \right) \leq \Tr \left( \Sigma_{\rm L} \left( y \right) \right)$.
\end{lemma}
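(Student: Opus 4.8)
The plan is to show that optimizing image embeddings against the $m$ descriptive attributes $\gA_{\rm des}(y)$ induces less within-class spread than optimizing against the single label prompt $\texttt{TP}(y)$. The key structural fact is that each DesAttr, by Definition~\ref{def:Des}, is a \emph{high-frequency} feature of class $y$: for $a \in \gA_{\rm des}(y)$ we have $U_y(a) = \sum_{x \in \gX_y} f_a(x)/|\gX_y|$ close to $1$, meaning most samples in the class are driven toward the same attribute anchor in $\gZ$. By contrast, $\texttt{TP}(y)$ is a single anchor that need not lie near the class centroid and whose alignment with individual samples is uncontrolled (this is exactly the ambiguity flagged in the discussion of Figure~\ref{fig:motivation}(a)). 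So I would first set up the ``optimized embedding'' for each regime as the minimizer of the corresponding alignment objective: writing $Z^{\rm A}_{\rm img}(x)$ for the embedding that (locally) maximizes the aggregated CLIP similarity $\frac{1}{|\gA_{\rm des}(y)|}\sum_{a \in \gA_{\rm des}(y)} \text{sim}_{\rm CLIP}(x,a)$, and $Z^{\rm L}_{\rm img}(x)$ for the one maximizing $\text{sim}_{\rm CLIP}(x,\texttt{TP}(y))$, both subject to a proximity regularizer to the original encoder output.

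Next I would compute the class covariance in each case. The clean way is to note that $Z^{\rm A}_{\rm img}(x)$ is pulled toward the \emph{average} of the DesAttr text embeddings, call it $\bar{z}_{\rm des}(y) = \frac{1}{m}\sum_{a \in \gA_{\rm des}(y)} Z_{\rm txt}(a)$, since aggregating similarities over $a$ and then differentiating gives a single effective target equal to this mean; whereas $Z^{\rm L}_{\rm img}(x)$ is pulled toward the single point $Z_{\rm txt}(\texttt{TP}(y))$. Under a first-order (linearized) model of the alignment update — i.e.\ $Z^{\bullet}_{\rm img}(x) = (1-\gamma) Z_{\rm img}(x) + \gamma\, t^{\bullet}(y) + o(\gamma)$ with $t^{\rm A}(y)=\bar z_{\rm des}(y)$, $t^{\rm L}(y)=Z_{\rm txt}(\texttt{TP}(y))$ — the class covariance becomes $\Sigma_{\bullet}(y) = (1-\gamma)^2 \Sigma_0(y) + \text{Var}_{x\in\gX_y}(\text{residual terms})$. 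Because the attribute target $\bar z_{\rm des}(y)$ is, by the high-frequency property, the common direction shared by (almost) all class-$y$ samples while the label target is a fixed generic template, the residual variance contributed in the attribute regime is no larger: averaging over $m$ attributes each with $U_y(a)\to 1$ damps the sample-to-sample fluctuation in how strongly each embedding is attracted, giving $\Tr(\Sigma_{\rm A}(y)) \le \Tr(\Sigma_{\rm L}(y))$. Taking the trace throughout and using linearity of trace finishes the inequality; I would also invoke the variance-reduction-by-averaging bound $\Var\!\big(\frac1m\sum_{j} \xi_j\big) \le \frac1m \max_j \Var(\xi_j)$ for the DesAttr aggregation to make the damping quantitative.

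The main obstacle is making the phrase ``embeddings optimized with respect to $\gA_{\rm des}(y)$'' precise enough to have a well-defined covariance $\Sigma_{\rm A}(y)$ while still matching the actual AttrVR procedure, which optimizes the input pattern $\delta$ (not the embeddings directly) against a softmax over aggregated similarities. I expect to bridge this either by (i) adopting the linearized/proximal surrogate above as a modeling assumption stated explicitly in the lemma's setup, or (ii) arguing that the gradient of the cross-entropy loss in Eq.~(\ref{eq:weighted_clip}) w.r.t.\ the image embedding points toward $\bar z_{\rm des}(y)$ in expectation, so that at a stationary point the within-class embeddings concentrate around that shared target. A secondary subtlety is that the DesAttr text anchors are themselves only \emph{nearly} co-located with the class centroid; I would control the gap by the frequency threshold in Definition~\ref{def:Des} (attributes with $U_y(a)$ close to $1$ appear in essentially all samples, so their embeddings sit near $\mu_y$), treating the residual as a higher-order term. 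Everything else — expanding $\|Z_{\rm img}(x)-\mu_y\|^2$, pushing the trace through the sum, bounding an average of variances by the max — is routine.
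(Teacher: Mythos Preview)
Your linearized model does not actually separate the two regimes. In the update $Z^{\bullet}_{\rm img}(x) = (1-\gamma) Z_{\rm img}(x) + \gamma\, t^{\bullet}(y) + o(\gamma)$, the target $t^{\bullet}(y)$ is \emph{constant across all $x$ in class $y$}, so the induced class covariance is $(1-\gamma)^2 \Sigma_0(y)$ in \emph{both} the attribute and the label case --- adding a class-wise constant contributes nothing to the covariance. Hence, to leading order, your construction gives $\Tr(\Sigma_{\rm A}(y)) = \Tr(\Sigma_{\rm L}(y))$, not the desired inequality. The ``residual terms'' you invoke would have to carry the entire argument, but you have not introduced any sample-dependent mechanism that would make them differ between the two regimes; the sentence ``averaging over $m$ attributes damps the sample-to-sample fluctuation in how strongly each embedding is attracted'' presupposes a per-sample attraction strength that is nowhere in the model you wrote. (Separately, the averaging bound $\Var(\frac{1}{m}\sum_j \xi_j) \le \frac{1}{m}\max_j \Var(\xi_j)$ requires the $\xi_j$ to be uncorrelated; without that you only get $\le \frac{1}{m}\sum_j \Var(\xi_j)$ via convexity of $\|\cdot\|^2$, with no extra $1/m$ factor.)

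The paper closes this gap by a structural device you do not have: it defines a \emph{per-attribute} embedding $Z_a(x)$ and sets $Z_A(x) = \frac{1}{|\gA_{\rm des}(y)|}\sum_{a} Z_a(x)$, then ties each $Z_a$ to the label embedding through the indicator, namely $Z_a(x) = f_a(x)\,Z_L(x) + (1 - f_a(x))\,\bar Z_a$. This is precisely where the frequency $U_y(a)=\E_{x\in\gX_y}[f_a(x)]$ enters: squaring and taking expectations gives $\E\|Z_a(x)-\bar Z_a\|^2 = U_y(a)\,\E[\|Z_L(x)-\bar Z_a\|^2 \mid f_a(x)=1] \le U_y(a)\,\E\|Z_L(x)-\mu_L\|^2$ (the last step under the stated mild assumption that $\bar Z_a$ is no farther from the class samples than $\mu_L$). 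Jensen's inequality on the average over $a$ then yields $\Tr(\Sigma_A)\le \Tr(\Sigma_L)$, with the factor $U_y(a)\le 1$ doing the work. So the missing ingredient in your plan is exactly this sample-dependent link between $Z_a(x)$ and $Z_L(x)$ via $f_a(x)$; without it, the high-frequency property of DesAttrs never enters the variance calculation.
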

Lemma~\ref{lem:Des} (details in Appendix \ref{app:proof}) shows that $\gA_{\rm des}(y)$ leads to reduced intra-class variances of image embeddings $Z_{\rm img}(x)$, as the most frequently identified attributes in $\gX_y$ imply that text embedding $Z_{\rm txt}(a)$ of attributes closely align with $Z_{\rm img}(x)$ for $x \in \gX_y$.
In addition, aggregating over $\gA_{\rm des}(y)$ pulls $Z_{\rm img}(x)$ towards to class mean $\mu_y$, reducing the dispersion of per-class sample embeddings.

\begin{lemma}\label{lem:Dist}
    Let $\gA_{\rm dist}(y) \subseteq \gA(y)$ be the set of distinctive attributes for class $y$ as with Definition~\ref{def:Dist}.
    Let $d_{\rm A}(y, y^{\prime})$ and $d_{\rm L}(y, y^{\prime})$ be $\ell^2$ distance between mean embeddings of two classes $y \neq y^{\prime}$, optimized with respect to $\gA_{\rm dist}(y)$ and $y$.
    Then, for any $y, y^{\prime} \in \gY$, we have $d_{\rm A} \left( y, y^{\prime} \right) \geq d_{\rm L} \left( y, y^{\prime} \right)$ if $|\gA_{\rm dist}(y)| > |\gY|$, which is easy to satisfy since $|\gY|$ is fixed while the size of $\gA_{\rm dist}(y)$ is unrestricted.
\end{lemma}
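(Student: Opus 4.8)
The plan is to track how optimizing the VR pattern $\delta$ relocates each per-class mean image embedding $\mu_y$ under the two supervision signals, and then compare the resulting inter-class distances. As in the modeling used for Lemma~\ref{lem:Des}, I would first describe the effect of driving the cross-entropy objective toward its optimum: the image embeddings of a class concentrate around their supervision targets in $\gZ$. Under label supervision every $x\in\gX_y$ is pulled toward the single text embedding $c_y\triangleq Z_{\rm txt}(\texttt{TP}(y))$, so $\mu^{\rm L}_y\to c_y$; under distinctive-attribute supervision $x\in\gX_y$ is pulled toward its $k$-nearest members of $\gA_{\rm dist}(y)$, so $\mu^{\rm A}_y$ converges to a convex combination $\sum_{a\in\gA_{\rm dist}(y)} w^y_a\,Z_{\rm txt}(a)$, where $w^y$ records how often each attribute is selected across $\gX_y$. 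In this idealized limit each $Z_{\rm txt}(a)$ with $a\in\gA_{\rm dist}(y)$ is realized as an image embedding of class $y$.

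\textbf{From the combinatorial definition to geometry.} Next I would turn Definition~\ref{def:Dist} into a separation statement in $\gZ$. Since $f_a$ in Definition~\ref{def: Attr} is the CLIP-similarity indicator ($f_a(x)=1$ exactly when $\cos(Z_{\rm img}(x),Z_{\rm txt}(a))$ exceeds the detection threshold), $V_y(a)\approx 1$ forces $\cos(Z_{\rm img}(x),Z_{\rm txt}(a))$ below threshold for essentially all $x\in\gX_{y'}$ with $y'\neq y$. Combined with the previous step — which places $\mu^{\rm A}_{y'}$ inside the convex hull of $\{Z_{\rm txt}(a'):a'\in\gA_{\rm dist}(y')\}$ and identifies each such $Z_{\rm txt}(a')$ with an image embedding of $y'$ — this makes $\langle Z_{\rm txt}(a),\mu^{\rm A}_{y'}\rangle$ small for every $a\in\gA_{\rm dist}(y)$, hence makes the cross term $\langle\mu^{\rm A}_y,\mu^{\rm A}_{y'}\rangle=\sum_{a}w^y_a\langle Z_{\rm txt}(a),\mu^{\rm A}_{y'}\rangle$ small. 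By contrast the label cross term $\langle c_y,c_{y'}\rangle$ stays large because $\texttt{TP}(y)$ and $\texttt{TP}(y')$ share the template's syntactic structure — the running observation behind Figure~\ref{fig:motivation}.

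\textbf{Comparison, the role of $|\gA_{\rm dist}(y)|>|\gY|$, and the main obstacle.} Expanding $d_{\rm A}(y,y')^2=\|\mu^{\rm A}_y\|^2+\|\mu^{\rm A}_{y'}\|^2-2\langle\mu^{\rm A}_y,\mu^{\rm A}_{y'}\rangle$ and $d_{\rm L}(y,y')^2$ analogously, the claim reduces to showing that the shrinkage of the cross term dominates any shrinkage of the squared-norm terms. The hypothesis enters here: with more distinctive attributes than classes, a counting argument on the bounded ``presence budget'' $\sum_{y'\neq y}\sum_{x\in\gX_{y'}}f_a(x)$ that Definition~\ref{def:Dist} enforces for each $a\in\gA_{\rm dist}(y)$ guarantees that enough members of $\gA_{\rm dist}(y)$ are genuinely unique to $y$ that their weighted centroid $\mu^{\rm A}_y$ cannot drift toward any $\mu^{\rm A}_{y'}$, and it controls the relevant averaged inner product by a factor decreasing in $|\gA_{\rm dist}(y)|/|\gY|$. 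I expect this to be the crux: one must rule out the degenerate possibility that averaging many attribute embeddings collapses $\mu^{\rm A}_y$ toward the global centroid of $\gZ$, where all class means coincide and the inequality fails, and one must pin down why $|\gY|$ — rather than $|\gY|-1$ or a multiple of it — is exactly the right threshold. By comparison, once the idealized-convergence model is granted, bounding the label cross term and controlling the norm terms are comparatively routine.
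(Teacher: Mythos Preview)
Your route is genuinely different from the paper's. You work directly in the embedding space $\gZ$: expand $d^2$ via the polarization identity, argue that distinctive attributes suppress the cross inner product $\langle \mu^{\rm A}_y,\mu^{\rm A}_{y'}\rangle$ while the shared template keeps $\langle c_y,c_{y'}\rangle$ large, and then try to use $|\gA_{\rm dist}(y)|>|\gY|$ through a pigeonhole-style ``presence budget'' count.

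The paper does not stay in $\gZ$. It introduces an auxiliary coordinate map $\phi(z)=\bigl(\sqrt{1-\text{sim}(z,T_A(a))}\bigr)_{a\in\gA}$ with $\gA=\bigcup_y\gA_{\rm dist}(y)$, and an analogous label-indexed map $\phi_L(z)=\bigl(\sqrt{1-\text{sim}(z,T_L(y_i))}\bigr)_{i=1}^{|\gY|}$, which it asserts are isometric with respect to cosine similarity. The comparison is then between $\|\phi(\mu_A(y))-\phi(\mu_A(y'))\|^2$ and $\|\phi_L(\mu_L(y))-\phi_L(\mu_L(y'))\|^2$: the attribute side splits into a sum $S_A$ over $a\in\gA_{\rm dist}(y)$ (so $|\gA_{\rm dist}(y)|$ summands) plus a non-negative remainder over $a\notin\gA_{\rm dist}(y)$, while the label side is a sum $S_L$ with exactly $|\gY|$ summands. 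From $\E_{x\in\gX_y}[f_a(x)]>\E_{x\in\gX_{y'}}[f_a(x)]$ the paper argues the per-term averages satisfy $\bar S_A>\bar S_L$, and the hypothesis $|\gA_{\rm dist}(y)|>|\gY|$ is then used purely as a cardinality statement: more summands times a larger average gives $S_A>S_L$. The threshold $|\gY|$ is simply the dimension of $\phi_L$ --- there is no counting over presence budgets.

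What each approach buys: the paper's transformation sidesteps exactly the obstacle you flag --- there is no norm-collapse to rule out, and the role of the threshold is mechanical (it is literally the number of label coordinates). Your approach is more elementary in that it avoids positing an isometry, but it leans on the empirical template-similarity observation from Figure~\ref{fig:motivation} as a formal ingredient, and the counting argument you sketch would still need to explain why $|\gY|$ (and not some multiple) is the right cutoff; in the paper's coordinate picture that question simply does not arise.
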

Lemma~\ref{lem:Dist} (details in Appendix \ref{app:proof}) implies that $\gA_{\rm dist}(y)$ promotes inter-class separation. $\gA_{\rm dist}(y)$ is uniquely associated with class $y$ and minimally present in classes $y^{\prime}$.
For samples $x^{\prime} \in \gX_{y^{\prime}}$, the similarity between $Z_{\rm img}(x^{\prime})$ and $Z_{\rm txt}(a)$ is low for $a \in \gA_{\rm dist}(y)$.
Thus, the mean embeddings of different classes are pushed further apart due to the minimal overlap between $\gA_{\rm dist}(y)$ and $\gA_{\rm dist}(y^{\prime})$.

\begin{corollary}
Let $\gZ_{\rm A}$ and $\gZ_{\rm L}$ be the embedding spaces obtained through attribute-based and label-based optimization.
Denote the respective class separability by $\texttt{CS}(\gY; \gZ_{\rm A})$ and $\texttt{CS}(\gY; \gZ_{\rm L})$, as with Definition~\ref{def:CS}.
Then, under the conditions of Lemmas~\ref{lem:Des} and~\ref{lem:Dist}, it holds that $\texttt{CS}(\gY; \gZ_{\rm A}) > \texttt{CS}(\gY; \gZ_{\rm L})$.
\end{corollary}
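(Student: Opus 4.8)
The plan is to read the corollary off the additive structure of $\texttt{CS}(\gY;\gZ)$ in Definition~\ref{def:CS}, treating its intra-class and inter-class contributions separately. Write $\sigma^2_{\rm A}(y),\sigma^2_{\rm L}(y)$ and $d_{\rm A}(y,y'),d_{\rm L}(y,y')$ for the intra-class variation and inter-class distance quantities of Definition~\ref{def:CS} evaluated in $\gZ_{\rm A}$ and $\gZ_{\rm L}$, respectively. Then, for each $\bullet \in \{\rm A, L\}$,
\[
\texttt{CS}(\gY;\gZ_{\bullet}) = -\frac{1}{|\gY|}\sum_{y\in\gY}\Tr\!\big(\sigma^2_{\bullet}(y)\big) + \frac{1}{|\gY|(|\gY|-1)}\sum_{y\neq y'} d_{\bullet}(y,y'),
\]
so it is enough to compare the two sums on the right, summand by summand.

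First I would bound the intra-class term: Lemma~\ref{lem:Des} gives $\Tr(\Sigma_{\rm A}(y)) \le \Tr(\Sigma_{\rm L}(y))$ for every $y\in\gY$, so, identifying $\sigma^2_{\bullet}(y)$ with $\Sigma_{\bullet}(y)$, averaging over $y$ and negating yields $-\tfrac{1}{|\gY|}\sum_y \Tr(\sigma^2_{\rm A}(y)) \ge -\tfrac{1}{|\gY|}\sum_y \Tr(\sigma^2_{\rm L}(y))$. Next I would bound the inter-class term: invoking Lemma~\ref{lem:Dist} under its hypothesis $|\gA_{\rm dist}(y)| > |\gY|$ — which is part of ``the conditions of Lemmas~\ref{lem:Des} and~\ref{lem:Dist}'' — gives $d_{\rm A}(y,y') \ge d_{\rm L}(y,y')$ for every pair $y\neq y'$, and averaging over ordered pairs gives $\tfrac{1}{|\gY|(|\gY|-1)}\sum_{y\neq y'} d_{\rm A}(y,y') \ge \tfrac{1}{|\gY|(|\gY|-1)}\sum_{y\neq y'} d_{\rm L}(y,y')$. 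Adding the two inequalities yields $\texttt{CS}(\gY;\gZ_{\rm A}) \ge \texttt{CS}(\gY;\gZ_{\rm L})$.

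The step I expect to be the main obstacle is upgrading ``$\ge$'' to the strict ``$>$'' claimed in the statement, since Lemmas~\ref{lem:Des} and~\ref{lem:Dist} are themselves only non-strict. To handle this I would argue that at least one of the per-term inequalities above is strict. Equality everywhere would force a degenerate regime — each descriptive attribute's text embedding aligning with the class images exactly as $\texttt{TP}(y)$ does, and each distinctive attribute being no more class-discriminative than $\texttt{TP}(y)$ — which runs against the defining selection in Definitions~\ref{def:Des} and~\ref{def:Dist}: $\gA_{\rm des}(y)$ collects the \emph{most frequently identified} attributes and $\gA_{\rm dist}(y)$ the \emph{most uniquely associated} ones. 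Hence, as long as the LLM-generated attributes are non-degenerate (i.e.\ carry genuinely more alignment or separation information than the label template on at least one class or one class pair), strictness holds there, and summing a strict with a non-strict inequality is strict, giving $\texttt{CS}(\gY;\gZ_{\rm A}) > \texttt{CS}(\gY;\gZ_{\rm L})$ and completing the argument.
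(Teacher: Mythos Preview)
Your proposal is correct and matches the paper's intended argument: the paper does not give a separate proof of the corollary at all, treating it as an immediate consequence of plugging Lemmas~\ref{lem:Des} and~\ref{lem:Dist} into the two summands of Definition~\ref{def:CS}, exactly as you do. Your observation about the strict versus non-strict inequality is a genuine subtlety the paper glosses over; in fact the paper's own proof of Lemma~\ref{lem:Dist} concludes with the strict inequality $d_A(y,y') > d_L(y,y')$ (even though the lemma \emph{statement} reads $\ge$), which is what actually secures the strict ``$>$'' in the corollary without needing your non-degeneracy argument.
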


Merits of attribute-based optimization inspire a practical VR solution.
However, Lemmas~\ref{lem:Des} and~\ref{lem:Dist} examine the effects of DesAttrs and DistAttrs in isolation, but attribute sets may overlap.
Quantifying their combined effect is challenging due to the complex non-linearity of neural network optimization, making a careful balance between DesAttrs and DistAttrs essential for better performance.

\section{Experiments}
\label{sec:exp}
\textbf{Baselines and Benchmarks.} 
To evaluate AttrVR, we use CLIP as the pre-trained model and conduct experiments on 12 downstream classification tasks with 16 shots for each class following \cite{oh2023blackvip}. These datasets encompass diverse visual domains, involving scenes, actions, textures, and fine-grained details (see Appendix \ref{app:dataset}).
We include four baselines, including (1) \textit{ZS}, which is the zero-shot performance of CLIP, (2) \textit{AttrZS}, which {applies our DesAttrs and DistAttrs for zero-shot classification (see Appendix \ref{app:zs})}, and state-of-the-art VR methods for VLMs: (3) \textit{VP} \citep{bahng2022exploring}, which overlays VR patterns on resized images, and (4) \textit{AR} \citep{tsai2020transfer,chen2023understanding}, which pads VR patterns around images. See the implementation details in Appendix \ref{app:train}. Regarding hyper-parameters in AttrVR, we set $k=3$ and $\lambda=0.5$ and will discuss their impact.

\textbf{Overall Performance Comparison.}
\begin{table}[t]
\caption{Accuracy comparison of different methods trained on 16-shot downstream classification tasks, using ViT-B16-based CLIP as the pre-trained model (Mean \% ± Std \%, ours are \textcolor{black}{\colorbox{gray!30}{highlighted}} and the highest is in \textbf{bold}).}
\vspace{-0.2cm}
\resizebox{\textwidth}{!}{
\begin{tabular}{c|cccccccccccc|c}
\toprule
Method                & Aircraft      & Caltech       & Cars          & DTD           & ESAT          & Flowers       & Food          & Pets          & SUN           & UCF           & IN            & Resisc        & Avg.                           \\
\midrule
ZS                      & 22.4          & 89.0          & 65.2          & 41.1          & 38.7          & 65.5          & 84.4          & 86.1          & 61.7          & 66.7          & 64.2          & 55.9          & 61.7                           \\
AttrZS                  & 28.5          & 94.1          & 65.1          & 54.3          & 50.8          & 81.6          & \textbf{86.5}          & 91.6          & 65.6          & 69.3          & 69.3          & 62.2          & 68.2                           \\
\multirow{2}{*}{} VP    & 32.1          & 93.5          & 65.5          & 61.4          & 91.2          & 82.5          & 82.3          & 91.0          & 65.8          & 73.8          & 64.2          & 79.1          & {73.5}          \\
                        & ±0.6          & ±0.1          & ±0.3          & ±0.5          & ±0.3          & ±0.4          & ±0.1          & ±0.3          & ±0.2          & ±0.5          & ±0.1          & ±0.3          &                                \\
\multirow{2}{*}{} AR    & 31.7          & 95.5          & 68.0          & 62.0          & 93.4          & 85.9          & 85.2          & 92.7          & 67.9          & 78.1          & 66.0          & 81.6          & {75.7}          \\
                        & ±0.3          & ±0.2          & ±0.3          & ±0.1          & ±0.1          & ±0.7          & ±0.1          & ±0.1          & ±0.3          & ±0.2          & ±0.0          & ±0.3          &                                \\
\rowcolor{lightgray}
\multirow{2}{*}{} AttrVR & \textbf{36.6} & \textbf{95.7} & \textbf{68.3} & \textbf{65.6} & \textbf{93.8} & \textbf{92.9} & 85.9 & \textbf{93.3} & \textbf{69.6} & \textbf{79.0} & \textbf{69.4} & \textbf{82.6} & \textbf{77.7} \\

                    & ±0.3          & ±0.1          & ±0.3          & ±0.8          & ±0.3          & ±0.4          & ±0.1          & ±0.0          & ±0.1          & ±0.6          & ±0.0          & ±0.4          &   \\
                        \bottomrule
\end{tabular}}
\label{tab:mainres}
\end{table}
Using CLIP with a ViT-B16 visual encoder as the pre-trained model, the comparison results are shown in Table \ref{tab:mainres}. It can be observed that even only using DesAttrs and DistAttrs for zero-shot classification already outperforms some baseline few-shot VR methods on the Caltech, Food, and SUN datasets. This demonstrates the effectiveness of DesAttrs and DistAttrs. However, VR methods remain necessary for datasets with significant domain differences, such as EuroSAT and DTD. AttrVR surpasses the baseline VR methods VP and AR across all datasets, achieving an average improvement of 2\% over the state-of-the-art methods across the 12 datasets. The advantages of AttrVR are particularly notable in fine-grained classification tasks with distinct visual feature differences, such as Flowers (+7.0\%), DTD (+3.6\%), and Aircraft (+4.9\%). On the Food dataset, AttrVR shows slightly lower accuracy than AttrZS, which may be because the images used by VR methods have a smaller size than those used in the zero-shot settings.
    
\textbf{Results of Sample Efficiency.}
\begin{figure}[t]
    \centering
    \includegraphics[width=\linewidth]{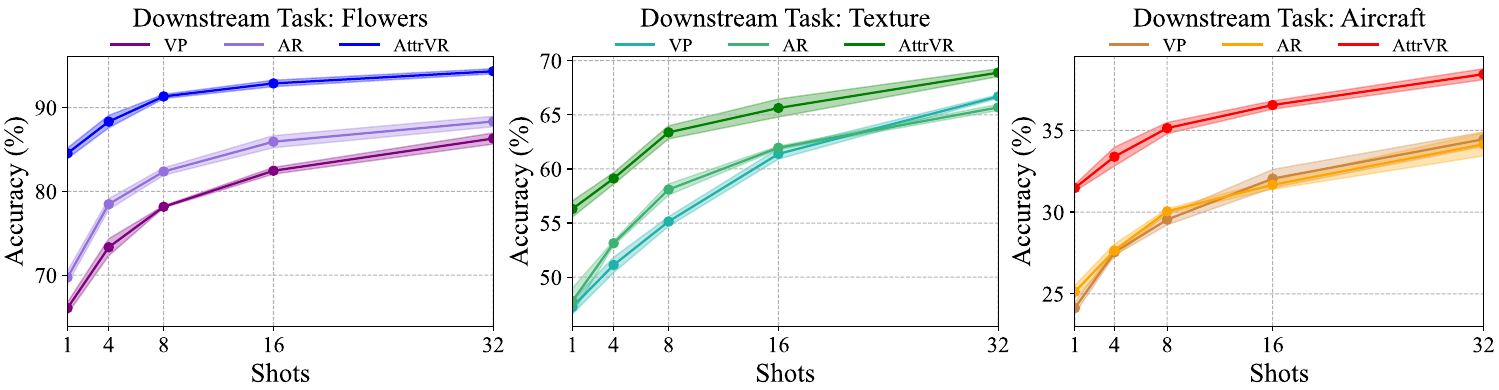}
    \caption{Accuracy comparison of different VR methods trained on different shots from [1, 4, 8, 16, 32]. Pre-trained ViT-B16-based CLIP is used. The striped area indicates the error bars.}
    \label{fig:shots}
    \vspace{-0.2cm}
\end{figure}
We evaluate the performances of all VR methods across sparse (1-, 4-, 8-shots) and abundant (32-shots) training settings on Flowers, Texture, and Aircraft datasets.
Figure \ref{fig:shots} shows that AttrVR maintains stronger performance across all settings than baselines, demonstrating both resilience to data scarcity and effective use of additional samples when available.

\begin{minipage}[t]{0.49\textwidth}
\textbf{Results on Different Backbones.} 
We investigate how VR methods perform across multiple pre-trained backbones from the CLIP model family, ranging from ResNet50 to ViT-L14.
Table \ref{tab:backbones} presents the mean accuracy over 12 datasets, demonstrating that (1) all VR methods become more effective with more powerful backbones, and (2)AttrVR consistently outperforms baseline VR methods regardless of the backbone architectural scales.
Detailed results and analysis for each dataset are provided in the Appendix \ref{app:backbones}.

\end{minipage}
\hspace{0.01\textwidth}
\begin{minipage}[t]{0.5\textwidth}
\vspace{-0.2cm}
\captionof{table}{Average accuracy of different VR methods on 12 datasets, using different backbones as CLIP visual encoders (Mean Accuracy \%, ours are \textcolor{black}{\colorbox{gray!30}{highlighted}} and the highest is in \textbf{bold}, RN stands for ResNet).}
\vspace{-0.1cm}
\resizebox{\textwidth}{!}{
\begin{tabular}{c|ccccc}
\toprule
       & RN50          & RN101         & ViT-B32       & ViT-B16       & ViT-L14       \\
\midrule
ZS     & 53.4          & 56.1          & 58.2          & 61.7          & 68.7          \\
AttrZS & 59.9          & 62.4          & 63.8          & 68.2          & 73.2          \\
VP     & 53.2          & 57.1          & 67.5          & 73.5          & 61.1          \\
AR     & 59.9          & 62.3          & 65.5          & 75.7          & 71.9          \\
\rowcolor{lightgray}
AttrVR & \textbf{64.2} & \textbf{66.8} & \textbf{69.1} & \textbf{77.7} & \textbf{75.5} \\
\bottomrule
\end{tabular}}
\label{tab:backbones}
\end{minipage}

\textbf{Visualization Results of AttrVR.}
\begin{figure}[t]
    \centering
    \includegraphics[width=\linewidth]{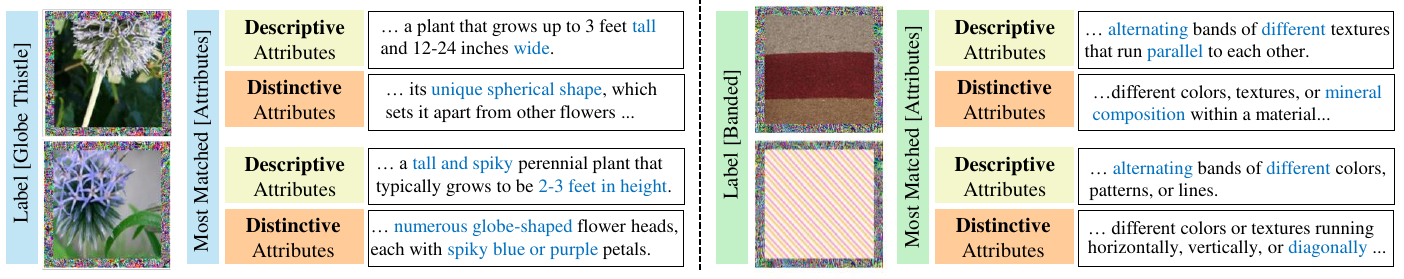}
    \caption{Visualization of images with AttrVR patterns, and their nearest DesAttrs and DistAttrs, using the ViT-B16-based CLIP as the pre-trained model. Two images labeled `Globe Thistle' from `Flowers' and two labeled `Banded' from `Texture' are chosen as examples (more in Appendix \ref{app:vis}).}
    \label{fig:visual}
\end{figure}
Figure \ref{fig:visual} illustrates the results of applying the trained VR pattern to images from the Flowers task with the label `globe thistle' and images from the Texture task with the label `Banded'. It also shows the closest DesAttrs and DistAttrs corresponding to these results. For `globe thistle', the closest DesAttr primarily describes its height and width, while the DistAttr highlights features that differentiate it from other flowers, such as its spherical shape. Different samples of `globe thistle' may have different closest DistAttrs; for instance, the image with blue-violet petals will be closest to a DistAttr with a similar description. Similarly, for images with the `Banded' label, the DesAttrs mainly describes the common feature of alternating textures, while the DistAttrs capture unique characteristics of the class or individuals, such as `mineral composition' or `diagonal textures' shown in Figure \ref{fig:visual}.

\textbf{Visualization Results of Embedding Space.}
\begin{figure}[t]
    \centering
    \includegraphics[width=\linewidth]{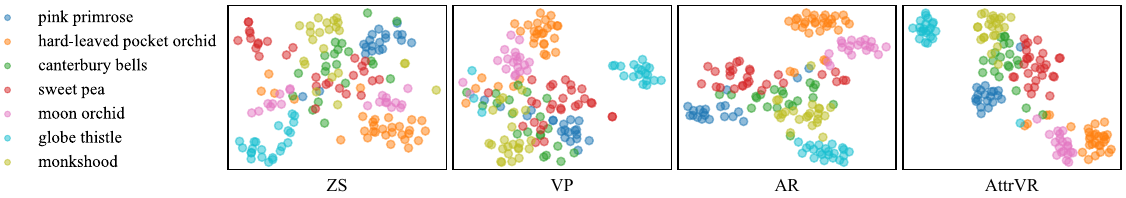}
    \caption{T-SNE visualization results of image embeddings from seven classes in the Flowers task, utilizing the ViT-B16-based CLIP as the pre-trained model. In the first plot, embeddings of zero-shot images are indicated with \textit{ZS}. The following three plots display embeddings of images with VR patterns, categorized by different training methods and marked as \textit{VP}, \textit{AR}, and \textit{AttrVR}, respectively.}
    \label{fig:tsne}
\vspace{-0.3cm}
\end{figure}
Figure \ref{fig:tsne} plots the 2D t-SNE embeddings \citep{van2008visualizing} of classifying samples from the Flowers task under input VR methods, with different colors representing different categories. It can be observed that in the zero-shot (\textit{ZS}) scenario, some classes, such as `moon orchid' (marked with pink dots), are scattered and difficult to classify. However, label-based VR methods, such as VP and AR, help to clarify the boundaries of these classes, making the samples easier to distinguish. Despite this improvement, some classes, like `canterbury bells' (marked with green dots) and `sweet pea' (marked with red dots) still remain relatively indistinguishable. After applying our AttrVR, the embeddings of various categories cluster more distinctly in the 2D visualization plane, resulting in clearer and more separable distributions.

\textbf{Ablation Studies.}
\begin{table}[t]
\caption{Ablation studies of AttrVR, using ViT-B16-based CLIP as the pre-trained model (Mean \% ± Std \%, ours are \textcolor{black}{\colorbox{gray!30}{highlighted}} and the highest is in \textbf{bold}).}
\vspace{-0.3cm}
\resizebox{\textwidth}{!}{
\begin{tabular}{c|cccccccccccc|c}
\toprule
Method                      & Aircraft      & Caltech       & Cars          & DTD           & ESAT          & Flowers       & Food          & Pets          & SUN           & UCF           & IN            & Resisc        & Avg.          \\
\midrule
\multirow{2}{*}{}w/o VR      & 25.4          & 94.1          & 62.3          & 54.3          & 48.5          & 80.8          & 84.8          & 91.6          & 64.3          & 68.4          & 68.0          & 61.2          & 67.0          \\
                             & ±0.4          & ±0.1          & ±0.1          & ±0.1          & ±0.3          & ±0.3          & ±0.1          & ±0.1          & ±0.0          & ±0.3          & ±0.1          & ±0.1          &               \\
\multirow{2}{*}{}{w/o DesAttrs} & 36.1          & \textbf{95.9} & 68.2          & 64.8          & 93.1          & 92.6          & 85.8          & 93.3          & 69.4          & 78.0          & 69.3          & 81.9          & 77.4          \\
                             & ±0.4          & ±0.1          & ±0.1          & ±0.4          & ±0.2          & ±0.6          & ±0.0          & ±0.1          & ±0.0          & ±0.4          & ±0.1          & ±0.6          &               \\
\multirow{2}{*}{}{w/o DistAttrs} & 35.9          & 95.6          & 68.2          & 64.4          & 93.8          & 92.4          & 85.7          & 93.0          & 67.7          & 78.6          & 68.9          & 81.8          & 77.2          \\
                             & ±0.3          & ±0.1          & ±0.2          & ±1.1          & ±0.3          & ±0.1          & ±0.0          & ±0.2          & ±0.2          & ±0.5          & ±0.0          & ±0.1          &               \\
\multirow{2}{*}{} w/o both Attrs   & 31.7          & 95.5          & 68.0          & 62.0          & 93.4          & 85.9          & 85.2          & 92.7          & 67.9          & 78.1          & 66.0          & 81.6          & {75.7}          \\
                        & ±0.3          & ±0.2          & ±0.3          & ±0.1          & ±0.1          & ±0.7          & ±0.1          & ±0.1          & ±0.3          & ±0.2          & ±0.0          & ±0.3          &                                \\
                             \rowcolor{lightgray}
\multirow{2}{*}{}  Ours      & \textbf{36.6} & 95.7          & \textbf{68.3} & \textbf{65.6} & \textbf{93.8} & \textbf{92.9} & \textbf{85.9} & \textbf{93.3} & \textbf{69.6} & \textbf{79.0} & \textbf{69.4} & \textbf{82.6} & \textbf{77.7} \\
                             & ±0.3          & ±0.1          & ±0.3          & ±0.8          & ±0.3          & ±0.4          & ±0.1          & ±0.0          & ±0.1          & ±0.6          & ±0.0          & ±0.4          &              \\
                             \bottomrule
\end{tabular}}
\label{tab:ablation}
\end{table}
Table \ref{tab:ablation} presents the ablation studies, and sequentially details: (1) \textit{w/o VR}: the results of AttrVR without training the input VR, where only zero-padded images from the downstream task are classified by our DesAttrs and DistAttrs, along with k-nearest neighbor attribute selection for zero-shot results; (2) \textit{w/o DesAttrs}: the results of training AttrVR utilizing only DistAttrs, excluding DesAttrs; (3) \textit{w/o DistAttrs}: the results of training AttrVR utilizing only DesAttrs, excluding DistAttrs; (4) \textit{w/o both Attrs}: the results without both attributes, which correspond to the results of the label-based VR approach; and (5) \textit{Ours}: the results using our proposed method, AttrVR.

In the absence of training VR patterns, performance on downstream tasks can be unsatisfactory when there is a significant domain shift from the pre-trained CLIP model’s domain. For instance, low accuracy is observed when the downstream tasks involve remote sensing datasets such as ESAT or Resisc, or texture datasets like DTD. Thus, training VR patterns is crucial for effectively adapting the pre-trained model to unfamiliar domains.

In the absence of DesAttrs, the method relies only on unique attributes during training, emphasizing class differences in the downstream task. This approach works well when the downstream domain closely matches the pre-trained model, as in broad classification tasks like Caltech. However, for tasks with significant domain shifts and few classes, such as the 10-class remote sensing dataset ESAT, it may miss some overall attributes relevant to certain classes, resulting in lower performance.

In the absence of DistAttrs, the method may overlook some attributes crucial for differentiating between categories in the downstream task. For datasets with many hard-to-differentiate classes, such as action classification datasets like UCF or texture classification datasets like DTD, not using DistAttrs can negatively impact classification performance. Besides, without both attributes, AttrVR degenerates into label-based VR, forfeiting its advantages in fine-grained classification tasks. 

\textbf{Hyper-parameter Analyses.}
\begin{figure}[t]
    \vspace{-0.3cm}
    \centering
    \includegraphics[width=\linewidth]{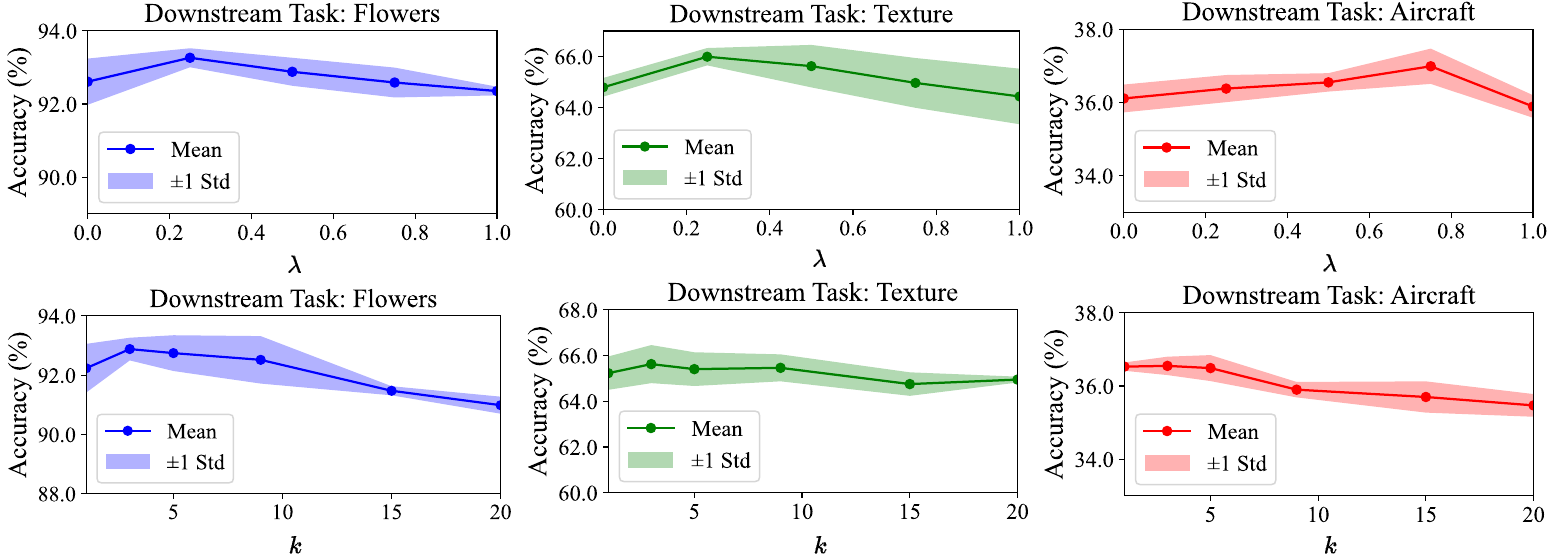} 
    \vspace{-0.7cm}
    \captionof{figure}{Performance comparison applying different hyper-parameters. The First row shows the impact weight $\lambda$ that balances DesAttrs and DistAttrs. The second row shows the impact of $k$, being the number of nearest attributes selected for classification. Pre-trained ViT-B16-based CLIP is used.}
    \vspace{-0.4cm}
    \label{fig:hyper}
\end{figure}%
Figure \ref{fig:hyper} illustrates the impact of the hyper-parameters $\lambda$ and $k$. The weight $\lambda$ is used to balance the contributions of DesAttrs and DistAttrs. For different tasks, the optimal $\lambda$ varies, with accuracy generally rising and then dropping as $\lambda$ increases, indicating a moderate $\lambda$ is needed to balance DesAttrs and DistAttrs. For convenience, we set $\lambda=0.5$ for all. The parameter $k$ represents the number of nearest attributes selected for classification; a value that is too small may result in unstable classification, while a value that is too large may lead to attribute redundancy. We chose $k=3$ for all datasets in this paper {(see Appendix~\ref{app:hyper} for its impact)}.

\textbf{More Experiments.} Appendix \ref{app:exp_aggr} includes the aggregation studies of the $k$-nearest attributes selection. Appendix \ref{app:ct} shows that label-based VR patterns are not compatible with AttrVR patterns. {Appendix~\ref{app:llm}, \ref{app:vlm} includes results of generating attributes with other LLMs or VLMs, and Appendix~\ref{app:random} demonstrates how AttrVR handles cases when generated attributes are of low quality.}

\section{Conclusion}
We introduce AttrVR, which extends unimodal Visual Reprogramming to vision-language models (e.g., CLIP) for downstream classification tasks, specifically targeting the CLIP's inherent ability to align visual and textual information.
Instead of using template-prompted labels, AttrVR optimizes through label-based attributes, making direct use of CLIP's cross-modal alignment properties. Both theoretical analysis and experimental results show that AttrVR outperforms conventional label-based VR. The visualization results, along with ablation, aggregation, and hyper-parameter studies, validate the effectiveness of AttrVR in reprogramming CLIP. The introduction of AttrVR marks an advancement in adapting VR from repurposing single-modal pre-trained models with predefined label spaces to multimodal models for downstream classification.

\clearpage
\section*{Acknowledgement}
CYC, ZSY, and FL are supported by the Australian Research Council (ARC) with grant number DE240101089, and FL is also supported by ARC with grant number DP230101540 and the NSF\&CSIRO Responsible AI program with grant number 2303037. 
JZQ is supported by ARC with grant number DP240101006. 
This research is also supported by The University of Melbourne’s Research Computing Services and the Petascale Campus Initiative. We sincerely appreciate the time and dedication of the reviewers in carefully reviewing our manuscript.
\section*{Ethics Statement}
Since the method proposed in this paper is used to improve VR performance for downstream classification tasks with CLIP, there is no potential negative impact.
\section*{Reproducibility Statement}
Appendix \ref{app:proof} offers clear explanations of the theoretical results in Section \ref{sec:theory}. For reproducing the experimental results presented in Section \ref{sec:exp}, we have included a link to our anonymous downloadable source code in the abstract. Appendix \ref{app:dataset} provides details about the open-source datasets utilized in our experiments.
\bibliography{iclr2025_conference}

\begin{thebibliography}{54}
\providecommand{\natexlab}[1]{#1}
\providecommand{\url}[1]{\texttt{#1}}
\expandafter\ifx\csname urlstyle\endcsname\relax
  \providecommand{\doi}[1]{doi: #1}\else
  \providecommand{\doi}{doi: \begingroup \urlstyle{rm}\Url}\fi

\bibitem[Bahng et~al.(2022)Bahng, Jahanian, Sankaranarayanan, and Isola]{bahng2022exploring}
Hyojin Bahng, Ali Jahanian, Swami Sankaranarayanan, and Phillip Isola.
\newblock Exploring visual prompts for adapting large-scale models.
\newblock \emph{arXiv}, 2022.

\bibitem[Bossard et~al.(2014)Bossard, Guillaumin, and Van~Gool]{food101}
Lukas Bossard, Matthieu Guillaumin, and Luc Van~Gool.
\newblock Food-101--mining discriminative components with random forests.
\newblock In \emph{ECCV}, 2014.

\bibitem[Brown(2020)]{brown2020language}
Tom~B Brown.
\newblock Language models are few-shot learners.
\newblock In \emph{NeurIPS}, 2020.

\bibitem[Cai et~al.(2024{\natexlab{a}})Cai, Ye, Feng, Qi, and Liu]{cai2024bayesian}
Chengyi Cai, Zesheng Ye, Lei Feng, Jianzhong Qi, and Feng Liu.
\newblock Bayesian-guided label mapping for visual reprogramming.
\newblock In \emph{NeurIPS}, 2024{\natexlab{a}}.

\bibitem[Cai et~al.(2024{\natexlab{b}})Cai, Ye, Feng, Qi, and Liu]{cai2024sample}
Chengyi Cai, Zesheng Ye, Lei Feng, Jianzhong Qi, and Feng Liu.
\newblock Sample-specific masks for visual reprogramming-based prompting.
\newblock In \emph{ICML}, 2024{\natexlab{b}}.

\bibitem[Chen et~al.(2023)Chen, Yao, Chen, Zhang, and Liu]{chen2023understanding}
Aochuan Chen, Yuguang Yao, Pin-Yu Chen, Yihua Zhang, and Sijia Liu.
\newblock Understanding and improving visual prompting: A label-mapping perspective.
\newblock In \emph{CVPR}, 2023.

\bibitem[Chen et~al.(2024)Chen, Wang, Shah, Tao, Wei, Xie, Sugiyama, and Raj]{chenunderstanding}
Hao Chen, Jindong Wang, Ankit Shah, Ran Tao, Hongxin Wei, Xing Xie, Masashi Sugiyama, and Bhiksha Raj.
\newblock Understanding and mitigating the label noise in pre-training on downstream tasks.
\newblock In \emph{ICLR}, 2024.

\bibitem[Chen(2024)]{chen2024model}
Pin-Yu Chen.
\newblock Model reprogramming: Resource-efficient cross-domain machine learning.
\newblock In \emph{AAAI}, 2024.

\bibitem[Cheng et~al.(2017)Cheng, Han, and Lu]{resisc}
Gong Cheng, Junwei Han, and Xiaoqiang Lu.
\newblock Remote sensing image scene classification: Benchmark and state of the art.
\newblock \emph{Proceedings of the IEEE}, 2017.

\bibitem[Cimpoi et~al.(2014)Cimpoi, Maji, Kokkinos, Mohamed, and Vedaldi]{dtd}
Mircea Cimpoi, Subhransu Maji, Iasonas Kokkinos, Sammy Mohamed, and Andrea Vedaldi.
\newblock Describing textures in the wild.
\newblock In \emph{CVPR}, 2014.

\bibitem[Deng et~al.(2009)Deng, Dong, Socher, Li, Li, and Fei-Fei]{imagenet}
Jia Deng, Wei Dong, Richard Socher, Li-Jia Li, Kai Li, and Li~Fei-Fei.
\newblock Imagenet: A large-scale hierarchical image database.
\newblock In \emph{CVPR}, 2009.

\bibitem[Dosovitskiy(2020)]{dosovitskiy2020image}
Alexey Dosovitskiy.
\newblock An image is worth 16x16 words: Transformers for image recognition at scale.
\newblock \emph{arXiv}, 2020.

\bibitem[Elsayed et~al.(2019)Elsayed, Goodfellow, and Sohl-Dickstein]{elsayedadversarial}
Gamaleldin~F Elsayed, Ian Goodfellow, and Jascha Sohl-Dickstein.
\newblock Adversarial reprogramming of neural networks.
\newblock In \emph{ICLR}, 2019.

\bibitem[Fei-Fei et~al.(2004)Fei-Fei, Fergus, and Perona]{caltech101}
Li~Fei-Fei, Rob Fergus, and Pietro Perona.
\newblock Learning generative visual models from few training examples: An incremental bayesian approach tested on 101 object categories.
\newblock In \emph{CVPR workshop}, 2004.

\bibitem[Ferrari \& Zisserman(2007)Ferrari and Zisserman]{ferrari2007learning}
Vittorio Ferrari and Andrew Zisserman.
\newblock Learning visual attributes.
\newblock \emph{NeurIPS}, 2007.

\bibitem[Hambardzumyan et~al.(2021)Hambardzumyan, Khachatrian, and May]{hambardzumyan2021warp}
K~Hambardzumyan, H~Khachatrian, and J~May.
\newblock Warp: Word-level adversarial reprogramming.
\newblock In \emph{ACL-IJCNLP}, 2021.

\bibitem[Han et~al.(2023)Han, Wang, Cui, Cao, Wang, Qi, and Liu]{han20232vpt}
Cheng Han, Qifan Wang, Yiming Cui, Zhiwen Cao, Wenguan Wang, Siyuan Qi, and Dongfang Liu.
\newblock E\^{} 2vpt: An effective and efficient approach for visual prompt tuning.
\newblock In \emph{ICCV}, 2023.

\bibitem[Harold et~al.(1997)Harold, Kushner, and Yin]{harold1997stochastic}
J~Harold, G~Kushner, and George Yin.
\newblock Stochastic approximation and recursive algorithm and applications.
\newblock \emph{Application of Mathematics}, 1997.

\bibitem[He et~al.(2016)He, Zhang, Ren, and Sun]{he2016deep}
Kaiming He, Xiangyu Zhang, Shaoqing Ren, and Jian Sun.
\newblock Deep residual learning for image recognition.
\newblock In \emph{CVPR}, 2016.

\bibitem[Helber et~al.(2019)Helber, Bischke, Dengel, and Borth]{eurosat}
Patrick Helber, Benjamin Bischke, Andreas Dengel, and Damian Borth.
\newblock Eurosat: A novel dataset and deep learning benchmark for land use and land cover classification.
\newblock \emph{IEEE Journal of Selected Topics in Applied Earth Observations and Remote Sensing}, 2019.

\bibitem[Hung et~al.(2023)Hung, Yang, Chen, and Lerch]{hung2023low}
Yun-Ning Hung, Chao-Han~Huck Yang, Pin-Yu Chen, and Alexander Lerch.
\newblock Low-resource music genre classification with cross-modal neural model reprogramming.
\newblock In \emph{ICASSP}, 2023.

\bibitem[Jia et~al.(2022)Jia, Tang, Chen, Cardie, Belongie, Hariharan, and Lim]{jia2022visual}
Menglin Jia, Luming Tang, Bor-Chun Chen, Claire Cardie, Serge Belongie, Bharath Hariharan, and Ser-Nam Lim.
\newblock Visual prompt tuning.
\newblock In \emph{ECCV}, 2022.

\bibitem[Jing et~al.(2023)Jing, Yuan, Ju, Yang, Wang, and Tao]{jing2023deep}
Yongcheng Jing, Chongbin Yuan, Li~Ju, Yiding Yang, Xinchao Wang, and Dacheng Tao.
\newblock Deep graph reprogramming.
\newblock In \emph{CVPR}, 2023.

\bibitem[Khattak et~al.(2023)Khattak, Rasheed, Maaz, Khan, and Khan]{khattak2023maple}
Muhammad~Uzair Khattak, Hanoona Rasheed, Muhammad Maaz, Salman Khan, and Fahad~Shahbaz Khan.
\newblock Maple: Multi-modal prompt learning.
\newblock In \emph{CVPR}, 2023.

\bibitem[Krause et~al.(2013)Krause, Stark, Deng, and Fei-Fei]{stanfordcars}
Jonathan Krause, Michael Stark, Jia Deng, and Li~Fei-Fei.
\newblock 3d object representations for fine-grained categorization.
\newblock In \emph{ICCV workshops}, 2013.

\bibitem[Kumar et~al.(2011)Kumar, Berg, Belhumeur, and Nayar]{kumar2011describable}
Neeraj Kumar, Alexander Berg, Peter~N Belhumeur, and Shree Nayar.
\newblock Describable visual attributes for face verification and image search.
\newblock \emph{TPAMI}, 2011.

\bibitem[Li et~al.(2024{\natexlab{a}})Li, Li, Erfani, Feng, Bailey, and Liu]{li2024visual}
Jinhao Li, Haopeng Li, Sarah Erfani, Lei Feng, James Bailey, and Feng Liu.
\newblock Visual-text cross alignment: Refining the similarity score in vision-language models.
\newblock In \emph{ICML}, 2024{\natexlab{a}}.

\bibitem[Li et~al.(2024{\natexlab{b}})Li, Li, Fu, Zhang, Wang, Chen, and Yang]{li2024promptkd}
Zheng Li, Xiang Li, Xinyi Fu, Xin Zhang, Weiqiang Wang, Shuo Chen, and Jian Yang.
\newblock Promptkd: Unsupervised prompt distillation for vision-language models.
\newblock In \emph{CVPR}, 2024{\natexlab{b}}.

\bibitem[Lorena et~al.(2019)Lorena, Garcia, Lehmann, Souto, and Ho]{lorena2019complex}
Ana~C Lorena, Lu{\'\i}s~PF Garcia, Jens Lehmann, Marcilio~CP Souto, and Tin~Kam Ho.
\newblock How complex is your classification problem? a survey on measuring classification complexity.
\newblock \emph{CSUR}, 2019.

\bibitem[Loshchilov \& Hutter(2016)Loshchilov and Hutter]{loshchilov2016sgdr}
Ilya Loshchilov and Frank Hutter.
\newblock Sgdr: Stochastic gradient descent with warm restarts.
\newblock In \emph{ICLR}, 2016.

\bibitem[Maji et~al.(2013)Maji, Rahtu, Kannala, Blaschko, and Vedaldi]{aircraft}
Subhransu Maji, Esa Rahtu, Juho Kannala, Matthew Blaschko, and Andrea Vedaldi.
\newblock Fine-grained visual classification of aircraft.
\newblock \emph{arXiv}, 2013.

\bibitem[Menon \& Vondrick(2023)Menon and Vondrick]{menonvisual}
Sachit Menon and Carl Vondrick.
\newblock Visual classification via description from large language models.
\newblock In \emph{ICLR}, 2023.

\bibitem[Nilsback \& Zisserman(2008)Nilsback and Zisserman]{flowers}
Maria-Elena Nilsback and Andrew Zisserman.
\newblock Automated flower classification over a large number of classes.
\newblock In \emph{Indian conference on computer vision, graphics \& image processing}, 2008.

\bibitem[Oh et~al.(2023)Oh, Hwang, Lee, Lim, Jung, Jung, Choi, and Song]{oh2023blackvip}
Changdae Oh, Hyeji Hwang, Hee-young Lee, YongTaek Lim, Geunyoung Jung, Jiyoung Jung, Hosik Choi, and Kyungwoo Song.
\newblock Blackvip: Black-box visual prompting for robust transfer learning.
\newblock In \emph{CVPR}, 2023.

\bibitem[Parkhi et~al.(2012)Parkhi, Vedaldi, Zisserman, and Jawahar]{parkhi2012cats}
Omkar~M Parkhi, Andrea Vedaldi, Andrew Zisserman, and CV~Jawahar.
\newblock Cats and dogs.
\newblock In \emph{CVPR}, 2012.

\bibitem[Pham et~al.(2021)Pham, Kafle, Lin, Ding, Cohen, Tran, and Shrivastava]{pham2021learning}
Khoi Pham, Kushal Kafle, Zhe Lin, Zhihong Ding, Scott Cohen, Quan Tran, and Abhinav Shrivastava.
\newblock Learning to predict visual attributes in the wild.
\newblock In \emph{CVPR}, 2021.

\bibitem[Pratt et~al.(2023)Pratt, Covert, Liu, and Farhadi]{pratt2023does}
Sarah Pratt, Ian Covert, Rosanne Liu, and Ali Farhadi.
\newblock What does a platypus look like? generating customized prompts for zero-shot image classification.
\newblock In \emph{ICCV}, 2023.

\bibitem[Radford et~al.(2021)Radford, Kim, Hallacy, Ramesh, Goh, Agarwal, Sastry, Askell, Mishkin, Clark, et~al.]{radford2021learning}
Alec Radford, Jong~Wook Kim, Chris Hallacy, Aditya Ramesh, Gabriel Goh, Sandhini Agarwal, Girish Sastry, Amanda Askell, Pamela Mishkin, Jack Clark, et~al.
\newblock Learning transferable visual models from natural language supervision.
\newblock In \emph{ICML}, 2021.

\bibitem[Soomro et~al.(2012)Soomro, Zamir, and Shah]{ucf101}
Khurram Soomro, Amir~Roshan Zamir, and Mubarak Shah.
\newblock A dataset of 101 human action classes from videos in the wild.
\newblock \emph{Center for Research in Computer Vision}, 2012.

\bibitem[Tian et~al.(2024)Tian, Zou, Yang, and Zhang]{tian2024argue}
Xinyu Tian, Shu Zou, Zhaoyuan Yang, and Jing Zhang.
\newblock Argue: Attribute-guided prompt tuning for vision-language models.
\newblock In \emph{CVPR}, 2024.

\bibitem[Tsai et~al.(2020)Tsai, Chen, and Ho]{tsai2020transfer}
Yun-Yun Tsai, Pin-Yu Chen, and Tsung-Yi Ho.
\newblock Transfer learning without knowing: Reprogramming black-box machine learning models with scarce data and limited resources.
\newblock In \emph{ICML}, 2020.

\bibitem[Tsao et~al.(2024)Tsao, Hsiung, Chen, Liu, and Ho]{tsao2023autovp}
Hsi-Ai Tsao, Lei Hsiung, Pin-Yu Chen, Sijia Liu, and Tsung-Yi Ho.
\newblock Autovp: An automated visual prompting framework and benchmark.
\newblock In \emph{ICLR}, 2024.

\bibitem[Van~der Maaten \& Hinton(2008)Van~der Maaten and Hinton]{van2008visualizing}
Laurens Van~der Maaten and Geoffrey Hinton.
\newblock Visualizing data using t-sne.
\newblock \emph{JMLR}, 2008.

\bibitem[Vinod et~al.(2020)Vinod, Chen, and Das]{vinod2020reprogramming}
Ria Vinod, Pin-Yu Chen, and Payel Das.
\newblock Reprogramming language models for molecular representation learning.
\newblock In \emph{NeurIPS}, 2020.

\bibitem[Wang et~al.(2023)Wang, Sun, Li, and Yang]{wang2023transhp}
Wenhao Wang, Yifan Sun, Wei Li, and Yi~Yang.
\newblock Transhp: Image classification with hierarchical prompting.
\newblock In \emph{NeurIPS}, 2023.

\bibitem[Wang et~al.(2024)Wang, Liang, He, Wang, and Tan]{wangconnecting}
Zhengbo Wang, Jian Liang, Ran He, Zilei Wang, and Tieniu Tan.
\newblock Connecting the dots: Collaborative fine-tuning for black-box vision-language models.
\newblock In \emph{ICML}, 2024.

\bibitem[Xiao et~al.(2010)Xiao, Hays, Ehinger, Oliva, and Torralba]{sun397}
Jianxiong Xiao, James Hays, Krista~A Ehinger, Aude Oliva, and Antonio Torralba.
\newblock Sun database: Large-scale scene recognition from abbey to zoo.
\newblock In \emph{CVPR}, 2010.

\bibitem[Xu et~al.(2024)Xu, Shi, Wei, Mu, Li, and Liang]{xutowards}
Zhuoyan Xu, Zhenmei Shi, Junyi Wei, Fangzhou Mu, Yin Li, and Yingyu Liang.
\newblock Towards few-shot adaptation of foundation models via multitask finetuning.
\newblock In \emph{ICLR}, 2024.

\bibitem[Yan et~al.(2023)Yan, Wang, Zhong, Dong, He, Lu, Wang, Shang, and McAuley]{yan2023learning}
An~Yan, Yu~Wang, Yiwu Zhong, Chengyu Dong, Zexue He, Yujie Lu, William~Yang Wang, Jingbo Shang, and Julian McAuley.
\newblock Learning concise and descriptive attributes for visual recognition.
\newblock In \emph{ICCV}, 2023.

\bibitem[Yang et~al.(2021)Yang, Tsai, and Chen]{yang2021voice2series}
Chao-Han~Huck Yang, Yun-Yun Tsai, and Pin-Yu Chen.
\newblock Voice2series: Reprogramming acoustic models for time series classification.
\newblock In \emph{ICML}, 2021.

\bibitem[Yang et~al.(2023{\natexlab{a}})Yang, Li, Zhang, Chen, Prabhavalkar, Sainath, and Strohman]{yang2023english}
Chao-Han~Huck Yang, Bo~Li, Yu~Zhang, Nanxin Chen, Rohit Prabhavalkar, Tara~N Sainath, and Trevor Strohman.
\newblock From english to more languages: Parameter-efficient model reprogramming for cross-lingual speech recognition.
\newblock In \emph{ICASSP}, 2023{\natexlab{a}}.

\bibitem[Yang et~al.(2023{\natexlab{b}})Yang, Panagopoulou, Zhou, Jin, Callison-Burch, and Yatskar]{yang2023language}
Yue Yang, Artemis Panagopoulou, Shenghao Zhou, Daniel Jin, Chris Callison-Burch, and Mark Yatskar.
\newblock Language in a bottle: Language model guided concept bottlenecks for interpretable image classification.
\newblock In \emph{CVPR}, 2023{\natexlab{b}}.

\bibitem[Zhou et~al.(2022{\natexlab{a}})Zhou, Yang, Loy, and Liu]{zhou2022conditional}
Kaiyang Zhou, Jingkang Yang, Chen~Change Loy, and Ziwei Liu.
\newblock Conditional prompt learning for vision-language models.
\newblock In \emph{CVPR}, 2022{\natexlab{a}}.

\bibitem[Zhou et~al.(2022{\natexlab{b}})Zhou, Yang, Loy, and Liu]{zhou2022learning}
Kaiyang Zhou, Jingkang Yang, Chen~Change Loy, and Ziwei Liu.
\newblock Learning to prompt for vision-language models.
\newblock \emph{IJCV}, 2022{\natexlab{b}}.

\end{thebibliography}
\bibliographystyle{iclr2025_conference}

\clearpage
\appendix

\section{Appendix 1: More Training Information}
\subsection{The Problem Setting of VR for CLIP}
\label{app:probset}
\begin{figure}[!h]
    \centering
    \includegraphics[width=\linewidth]{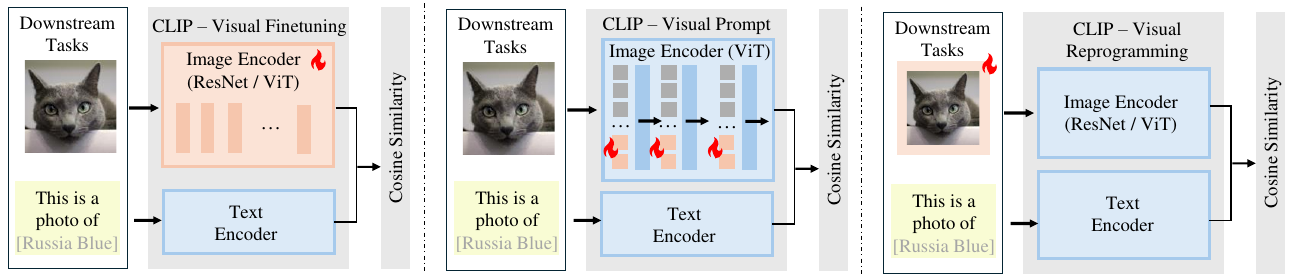}
    \caption{Different problem settings for repurposing CLIP for image classification tasks. The left shows finetuning the visual encoder, the middle illustrates a generalized approach to visual prompting, and the right depicts visual reprogramming (VR). The trainable parameters are highlighted in `fires'. VR merely modifies the input image space, making it applicable to any encoder architecture.}
    \label{fig:probset}
\end{figure}
The problem setting for VR and its differences from other repurposing methods for CLIP in image classification tasks are illustrated in Figure \ref{fig:probset}. For finetuning methods, the weights in the pre-trained ResNet-based or ViT-based CLIP are optimized directly using samples from the downstream task, making the Image Encoder variable. Visual prompting methods \citep{jia2022visual,khattak2023maple} add parallel trainable weights next to the embedding patches in the first layer or each layer of the Image Encoder, but this approach is only applicable when ViT is used as the visual encoder. 

Unlike fine-tuning or visual prompting, VR \citep{chen2023understanding,bahng2022exploring} focuses on modifying the input space of the model rather than the pre-trained model itself. VR directly incorporates trainable parameters into the input images to achieve the repurposing of the pre-trained CLIP. Compared to other methods, VR offers the following advantages:
\begin{itemize}
    \item Since VR only modifies the input space, it has fewer parameters (see Table \ref{tab:cost} for details), and the number of parameters is independent of the size of the pre-trained model, only depending on the input image size. This results in \textit{lower training time overhead}.
    \item By solely altering the input space, VR ensures that the original parameters of the pre-trained model remain unchanged, effectively \textit{addressing practical issues} such as catastrophic forgetting in large models and copyright concerns.
    \item As the VR pattern is applied only to the images before input, it is independent of the architecture of the pre-trained model's image encoder, making it \textit{applicable to all architectures}.
    \item The VR method \textit{is orthogonal to} other fine-tuning methods—VR modifies the input space, while other methods adjust the internal parameters of the model. Therefore, VR can be combined with various methods to further enhance performance.
\end{itemize}

\subsection{Implement Details}
\label{app:implement}

\subsubsection{Generating DesAttrs and DistAttrs}\label{app:attr_gen}
We used GPT-3.5 \citep{brown2020language} to generate DesAttrs and DistAttrs. The specific hyper-parameter settings for text generation were as follows: temperature set to 0.99, maximum token size of 50, and generating 25 entries for each category. The termination signal was ‘.’, and only entries with a length greater than 20 characters were considered valid and retained.

To generate DesAttrs, we queried each class in every downstream task with the following input instruction:
\begin{itemize} \notag
    \item \texttt{Describe the appearance of the} [Task Info.] [Class Name],
\end{itemize}
where [Task Info.] represents the description of downstream tasks, shown in Table \ref{tab:data_info}, and [Class Name] represents the name of label $y^{\rm T} \in \mathcal{Y}^{\rm T}$.

To generate DistAttrs, we use the following input instruction:
\begin{itemize} \notag
    \item \texttt{Describe the unique appearance of a/an} [Class Name] \texttt{from the other} [Task Info.].
\end{itemize}
Using this approach, we successfully generated DesAttrs and DistAttrs. In the experiments, we set $m=20$ as the size for sets $\tilde{\gA}_{\rm des}(y^{\rm T})$ and $\tilde{\gA}_{\rm dist}(y^{\rm T})$. When the number of valid entries generated for class $y^{\rm T}$ is less than $m$, we randomly resampled to ensure that each class had exactly 20 attributes, facilitating subsequent experiments.

\subsubsection{Dataset Details}
\label{app:dataset}
\begin{table}[!h]
\centering
\caption{Dataset Information}
\resizebox{\textwidth}{!}{
\begin{tabular}{c|cccccccccccc}
\toprule
 & Aircraft      & Caltech       & Cars          & DTD      & ESAT       & Flowers       & Food          & Pets          & SUN           & UCF           & IN      & Resisc              \\
 \midrule
 \makecell{Task \\ Info.} & \makecell{aircraft \\ model} & object & \makecell{fine-grained \\ automobile} & texture & \makecell{remote sensing \\ land cover} & flower & food & pet & scene & action & object & \makecell{remote \\ sensing scene} \\
 \makecell{Class \\ Num. } & 100      & 100       & 196          & 47      & 10       & 102       & 101          & 37          & 397           & 101           & 1000      & 45              \\
  \makecell{Batch \\ Size } & 64      & 64       & 64          & 64      & 64       & 64       & 64          & 64          & 64           & 64           & 64      & 64              \\
\bottomrule
\end{tabular}}
\label{tab:data_info}
\end{table}

This paper establishes benchmarks for downstream classification tasks following prior work \citep{oh2023blackvip}, employing the same methodology to split the 16-shot training, validation, and test sets. The 12 datasets are listed as follows: FGVCAircraft (Aircraft) \citep{aircraft}, Caltech101 (Caltech) \citep{caltech101}, StanfordCars (Cars) \citep{stanfordcars}, Texture (DTD) \citep{dtd}, EuroSAT (ESAT) \citep{eurosat}, Flowers102 (Flowers) \citep{flowers}, Food101 (Food) \citep{food101}, OxfordPets (Pets) \citep{parkhi2012cats}, SUN397 (SUN) \citep{sun397}, UCF101 (UCF) \citep{ucf101}, ImageNet (IN) \citep{imagenet}, Resisc45 (Resisc) \citep{resisc}. All image datasets are publicly available. Detailed task information and the batch size used for training VR are provided in Table \ref{tab:data_info}.

\subsubsection{Training VR Patterns}
\label{app:train}
For all VR baseline methods compared in the paper, we adopted the following uniform training settings: an initial learning rate of 40, a momentum of 0.9 using the SGD optimizer \citep{harold1997stochastic}, and a cosine annealing learning rate scheduler \citep{loshchilov2016sgdr}. The total number of learning epochs was set to 200. The experimental results represent the average across three seeds.

Regarding method-specific hyper-parameters, for VP \citep{bahng2022exploring}, we maintained consistency with the original work, using a VR noise pattern with a frame size of 30, as detailed in Table \ref{tab:cost}. For AR \citep{chen2023understanding,tsai2020transfer}, as noted by \cite{tsao2023autovp}, the size of different VR patterns can impact the results. In this study, we conducted experiments with frame sizes of [8, 16, 32, 48] and selected 16 as the final frame size due to its optimal performance with fewer parameters. To ensure a fair comparison, our AttrVR adopted the same parameter settings as AR.
\subsection{{Details About Zero-shot AttrZS}}
\label{app:zs}
For the $i$-th downstream image $x_i^{\rm T}$ and certain label $y^{\rm T}$, AttrZS is the zero-shot version of AttrVR where we do not train the VR noise pattern. AttrZS first resizes the sample to the required input size for the model, then calculates the similarity between the resized sample and the attribute descriptions of each class in a single pass, applying the similar equation of AttrVR:
{\footnotesize \begin{equation}\label{eq:zs}
    \text{sim}_{\rm AttrZS}(x_i^{\rm T}, y^{\rm T}) = \frac{\lambda}{k} \sum_{a \in \tilde{\gA}_{\rm des}^{k}} \text{sim}_{\rm CLIP}(\tilde{x}_i^{\rm T}, a) + \frac{1 - \lambda}{k} \sum_{a^{\prime} \in \tilde{\gA}_{\rm dist}^{k}} \text{sim}_{\rm CLIP}(\tilde{x}_i^{\rm T}, a^{\prime}),
\end{equation}}
where $k, \lambda$ are hyper-parameters that are also used in AttrVR, $\tilde{x}_i^{\rm T}$ is the resized image and $a, a^{\prime}$ are attributes chosen from the attribute set $\tilde{\gA}_{\rm des}^{k},\tilde{\gA}_{\rm dist}^{k}$. Then the label with largest $\text{sim}_{\rm AttrZS}(x_i^{\rm T}, y^{\rm T})$ will be the prediction result for sample $x_i^{\rm T}$.

\section{Appendix 2: More Theoretical Justification}
\label{app:proof}

\begin{lemma}[{\it cf.} Lemma~\ref{lem:Des}]
    Let $\gA_{\rm des}(y) \subseteq \gA(y)$ be the set of descriptive attributes for class $y$ as with Definition~\ref{def:Des}.
    Let $\Sigma_{\rm A}$ and $\Sigma_{\rm L}$ be the covariance matrices of the embeddings optimized with respect to $\gA_{\rm des}(y)$ and $y$, respectively.
    Then, for any class $y \in \gY$, we have $\Tr \left( \Sigma_{\rm A} \left( y \right) \right) \leq \Tr \left( \Sigma_{\rm L} \left( y \right) \right)$.
\end{lemma}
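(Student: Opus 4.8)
\textbf{Proof proposal for Lemma~\ref{lem:Des}.}

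The plan is to show that supervising the visual-reprogramming optimization with the descriptive-attribute set $\gA_{\rm des}(y)$ produces image embeddings whose per-class scatter is no larger than that obtained from the single label prompt $\texttt{TP}(y)$. First I would fix a class $y$ and set up the two embedding ensembles to be compared: for each $x \in \gX_y$, let $Z_{\rm L}(x)$ denote the image embedding obtained after optimizing $\delta$ against the single target text embedding $Z_{\rm txt}(\texttt{TP}(y))$, and let $Z_{\rm A}(x)$ denote the embedding obtained after optimizing against the aggregated target $\bar Z_{\rm txt}^{\rm des}(y) \triangleq \frac{1}{|\gA_{\rm des}(y)|}\sum_{a \in \gA_{\rm des}(y)} Z_{\rm txt}(a)$. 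The covariance matrices in the statement are $\Sigma_{\rm L}(y) = \frac{1}{|\gX_y|}\sum_{x \in \gX_y}\big(Z_{\rm L}(x) - \mu_y^{\rm L}\big)\big(Z_{\rm L}(x) - \mu_y^{\rm L}\big)^\top$ and similarly $\Sigma_{\rm A}(y)$, with $\mu_y^{\rm L}, \mu_y^{\rm A}$ the respective class means. The key structural fact I would invoke is Definition~\ref{def:Des}: because the attributes in $\gA_{\rm des}(y)$ are the $m$ most frequently identified attributes across $\gX_y$, by the $\texttt{CLIP}$-similarity criterion of Definition~\ref{def: Attr} each $Z_{\rm txt}(a)$ with $a \in \gA_{\rm des}(y)$ has high cosine similarity with $Z_{\rm img}(x)$ for a large fraction of $x \in \gX_y$; hence their average $\bar Z_{\rm txt}^{\rm des}(y)$ is well-aligned with \emph{every} sample in the class, whereas the single prompt $Z_{\rm txt}(\texttt{TP}(y))$ need not be.

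Next I would make the connection between the supervision target and the resulting scatter precise. The cross-entropy objective in Eq.~(\ref{eq:lbvr}) (equivalently its attribute-based analogue built on Eq.~(\ref{eq:weighted_clip})) drives each $Z_{\rm img}(x)$ toward its class target direction while pushing it away from competing targets; at (approximate) optimality the embeddings concentrate around the direction of the target. I would model this by writing $Z(x) = c\, T_y + \xi(x)$, where $T_y$ is the (normalized) target for class $y$, $c>0$ the concentration induced by the loss, and $\xi(x)$ a residual capturing the part of $Z_{\rm img}(x)$ not explained by the target. The scatter $\Tr(\Sigma(y))$ is then governed by $\Tr(\mathrm{Cov}_{x}[\xi(x)])$. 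The point is that when the target is the aggregate $\bar Z_{\rm txt}^{\rm des}(y)$ — which averages out the idiosyncratic variation of individual attribute embeddings and, by the frequency property, sits near the centroid of $\{Z_{\rm img}(x)\}_{x \in \gX_y}$ — the residuals are smaller in aggregate than when the target is a single, possibly off-centroid, prompt embedding. Concretely I would argue $\sum_{x \in \gX_y}\|Z_{\rm img}(x) - \bar Z_{\rm txt}^{\rm des}(y)\|^2 \le \sum_{x \in \gX_y}\|Z_{\rm img}(x) - Z_{\rm txt}(\texttt{TP}(y))\|^2$ using the variational characterization of the mean (the centroid minimizes summed squared distance) together with the fact that $\bar Z_{\rm txt}^{\rm des}(y)$ is closer to that centroid than $Z_{\rm txt}(\texttt{TP}(y))$ — this is exactly the ``DesAttrs concentrate near the cluster centers'' observation of Figure~\ref{fig:motivation}(b). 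Since the per-sample scatter is dominated by the distance to the supervising target, this inequality transfers to $\Tr(\Sigma_{\rm A}(y)) \le \Tr(\Sigma_{\rm L}(y))$, with the ``pull toward $\mu_y$'' remark in the body of the paper being the intuitive content of this step.

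I expect the main obstacle to be making the second step rigorous without an explicit closed form for the optimizer $\delta^*$: the map from ``supervision target'' to ``distribution of image embeddings'' passes through the frozen CLIP image encoder composed with a non-convex optimization, so there is no clean linear relationship. The honest route is to work at the level of the induced loss landscape — either assuming the optimization reaches a stationary point where the gradient of the cross-entropy with respect to each $Z_{\rm img}(x)$ vanishes, and linearizing there, or adopting a first-order/proximal surrogate for the effect of one gradient step (as the paper itself only claims a qualitative ``reduces intra-class variance'' statement). I would state the needed regularity as an explicit assumption (e.g.\ that the target enters the embedding distribution through its mean in the manner above, or that the encoder is locally well-conditioned near the cluster), prove the inequality under that assumption via the centroid-minimization argument, and flag that the combined DesAttrs/DistAttrs interaction is left informal — consistent with the remark following the corollary that ``quantifying their combined effect is challenging due to the complex non-linearity of neural network optimization.'' A secondary subtlety is the $k$-nearest sample-specific selection (Eq.~(\ref{eq:knearest})): I would handle it by noting that restricting to the $k$ nearest DesAttrs of $x$ only tightens the alignment between $Z_{\rm img}(x)$ and its target, so the inequality is preserved a fortiori.
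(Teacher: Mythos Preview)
Your proposal takes a different route from the paper. You treat $Z_{\rm A}(x)$ as the image embedding obtained after optimizing $\delta$ against a \emph{single} aggregated text target $\bar Z_{\rm txt}^{\rm des}(y)$, then argue via the variational characterization of the mean that, because this target sits closer to the image-embedding centroid than $Z_{\rm txt}(\texttt{TP}(y))$, the post-optimization scatter must be smaller. The paper instead never reasons about the $\delta$-optimization at all: it \emph{defines} the attribute-based embedding as an average of per-attribute image embeddings, $Z_{\rm A}(x) = \frac{1}{|\gA_{\rm des}(y)|}\sum_{a \in \gA_{\rm des}(y)} Z_a(x)$, and the core variance reduction comes from Jensen's inequality on the convex map $z \mapsto \|z\|^2$, bounding the variance of the average by the average of per-attribute variances. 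Each per-attribute variance is then tied back to the label-based variance through the structural ansatz $Z_a(x) = f_a(x)\, Z_{\rm L}(x) + (1 - f_a(x))\,\bar Z_a$ (with $f_a$ the attribute indicator of Definition~\ref{def: Attr}); expanding, using $\E_{x\in\gX_y}[f_a(x)] = U_y(a) \le 1$, a Cauchy--Schwarz bound on the cross-terms, and a stated ``mild assumption'' that $\bar Z_a$ lies at least as close to the class-conditional mean as $\mu_L$, gives $\Tr(\Sigma_{\rm A}) \le \Tr(\Sigma_{\rm L})$.

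What the paper's approach buys is a concrete inequality chain (Jensen $+$ Cauchy--Schwarz) that sidesteps exactly the obstacle you flag --- the intractable map from supervision target through non-convex optimization to embedding distribution --- because averaging over attributes does the work, at the price of the indicator-mixture model for $Z_a$. What your approach would buy, if the target-to-scatter link could be made rigorous, is a statement closer to what actually happens during AttrVR training; but as you correctly note, that link needs a strong regularity assumption, and your intermediate inequality $\sum_{x}\|Z_{\rm img}(x) - \bar Z^{\rm des}\|^2 \le \sum_{x}\|Z_{\rm img}(x) - Z_{\rm txt}(\texttt{TP}(y))\|^2$ compares distances of a \emph{fixed} embedding set to two targets rather than the variances of two \emph{different} optimized embedding distributions. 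The ingredient you are missing relative to the paper is the per-attribute decomposition of $Z_{\rm A}$ together with the convexity/averaging step; your centroid-proximity argument is essentially orthogonal to that mechanism.
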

\begin{proof}
    Let $\gX_y$ be the set of all images belonging to class $y$.
    We begin by defining the following: let $Z_L: \gX \to \sR^d$ be the label-based embedding function, let $Z_A: \gX \to \sR^d$ be the attribute-based embedding function, and let $Z_a: \gX \to \sR^d$ be the embedding function for a single attribute $a \in \gA$.

    Denote the mean embeddings, covariance matrices, and traces resulting from $Z_L$ by 
    \begin{equation*}
        \begin{aligned}
            \mu_L &= \E_{x \in \gX_y} \left[ Z_L(x) \right], \\    
            \sigma_L &= \E_{x \in \gX_y} \left[ (Z_L(x) - \mu_L)^\top(Z_L(x) - \mu_L) \right], \\    
            \Tr(\sigma_L) &= \E_{x \in \gX_y} \left[ \left\| Z_L(x) - \mu_L \right\|^2 \right].
        \end{aligned}
    \end{equation*}
    Similarly, for $Z_A$, we have
    \begin{equation*}
        \begin{aligned}
            \mu_A & = \E_{x \in \gX_y} \left[ Z_A(x) \right], \\
            \Sigma_A & = \E_{x \in \gX_y} \left[ (Z_A(x) - \mu_A)^\top(Z_A(x) - \mu_A) \right], \\
            \Tr(\sigma_A) & = \E_{x \in \gX_y} \left[ \left\| Z_A(x) - \mu_A \right\|^2 \right].
        \end{aligned}
    \end{equation*}

    By Definition~\ref{def:Des}, we further express $Z_A(x)$ in terms of $Z_a(x)$:
    \begin{equation*}
        Z_A(x) = \frac{1}{|  \gA_{\rm des} (y) |} \sum_{a \in \gA_{\rm des} (y)} Z_a(x),
    \end{equation*}
    and accordingly, the attribute-mean is 
    \begin{equation*}
        \mu_A =  \left[ Z_A(x) \right] = \frac{1}{|  \gA_{\rm des} (y) |} \sum_{a \in \gA_{\rm des} (y)} \E_{x \in \gX_y} \left[ Z_a(x) \right].
    \end{equation*}
    Then, the difference between embedding and mean is
    \begin{equation*}
        Z_A(x) - \mu_A = \frac{1}{|  \gA_{\rm des} (y) |} \sum_{a \in \gA_{\rm des} (y)} \left( Z_a(x) - \E_{x \in \gX_y} \left[ Z_a(x) \right] \right).
    \end{equation*}

    Jensen's inequality states that for any convex function $f$ and probability measure $p$, we have: $f(\E_p [X]) \leq \E_p[f(X)]$.
    Applying this to the squared norm (which is convex), we obtain:
    \begin{equation*}
        \begin{aligned}
           \left\| Z_A(x) - \mu_A \right\|^2 & = \left\| \frac{1}{|\gA_{\rm des}(y)|} \sum_{a \in \gA_{\rm des}(y)} (Z_a(x) - \E_{x \in \gX_y}[Z_a(x)]) \right\|^2 \\
                                             & \leq \frac{1}{|\gA_{\rm des}(y)|} \sum_{a \in \gA_{\rm des}(y)} \left\| (Z_a(x) - \E_{x \in \gX_y}[Z_a(x)]) \right\|^2.
        \end{aligned}
    \end{equation*}
    Taking expectations on both LHS and RHS leads to
    {\footnotesize \begin{equation}\label{eq:proof_int_0}
        \E_{x \in \gX_y} \left[ \left\| Z_A(x) - \mu_A \right\|^2 \right] \leq \frac{1}{|\gA_{\rm des}(y)|} \sum_{a \in \gA_{\rm des}(y)} \E_{x \in \gX_y} \left[ \left\| (Z_a(x) - \E_{x \in \gX_y}[Z_a(x)]) \right\|^2 \right].
    \end{equation}}

    We also know that for any attribute $a \in \gA_{\rm des}(y)$, 
    \begin{equation*}
        U_y(a) \geq U_y(a^{\prime}), \quad \forall a^\prime \in \gA(y) \setminus \gA_{\rm des}(y),
    \end{equation*}
    where $U_y(a) = \frac{1}{|\gX_y|} \sum_{x \in \gX_y} f_a(x)$ is the frequency of attribute $a$ in class $y$.
    Define $\bar{Z}_a = \E_{x \in \gX_y} [Z_a(x)]$ as the mean embedding for attribute $a$ in class $y$, we can then express the variance of attribute-based embedding as:
    \begin{equation*}
        \begin{aligned}
            \E_{x \in \gX_y} \left[ \left\|  Z_A(x) - \mu_A  \right\|^2 \right] & = \E_{x \in \gX_y} \left[ \left\|  \frac{1}{|\gA_{\rm des}(y)|} \sum_{a \in \gA_{\rm des}(y)} (Z_a(x) - \bar{Z}_a)  \right\|^2 \right] \\
                & = \frac{1}{|\gA_{\rm des}(y)|} \sum_{a, a^\prime \in \gA_{\rm des}(y)} \E_{x \in \gX_y} \left[ (Z_a(x) - \bar{Z}_a)^\top(Z_{a^\prime}(x) - \bar{Z}_{a^\prime}) \right].
        \end{aligned}
    \end{equation*}
    By the Cauchy-Schwarz inequality, we have:
    {\footnotesize \begin{equation}\label{eq:proof_int_1}
        \E_{x \in \gX_y} \left[(Z_a(x) - \bar{Z}_a)^\top(Z_{a^\prime}(x) - \bar{Z}_{a^\prime})\right] \leq \sqrt{
        \E \left[ \left\| Z_a(x) - \bar{Z}_a \right\|^2 \right] \E \left[ \left\| Z_{a^\prime}(x) - \bar{Z}_{a^\prime} \right\|^2 \right].
        }
    \end{equation}}
    Since we have already established the relationship between $\E_{x \in \gX_y} \left[ \left\|  Z_A(x) - \mu_A  \right\|^2 \right]$ and $\E_{x \in \gX_y} \left[ \left\| Z_a(x) - \bar{Z}_a \right\|^2 \right]$,
    we then proceed to prove that for any $a \in \gA_{\rm des}(y)$,
    {\footnotesize \begin{equation}\label{eq:proof_int_2}
        \E_{x \in \gX_y} \left[ \left\| Z_a(x) - \bar{Z}_a \right\|^2 \right] \leq U_y(a) \E_{x \in \gX_y} \left[ \left\| Z_L(x) - \mu_L \right\|^2 \right].
    \end{equation}}
    By definition of $f_a(x)$, it takes value $1$ if attribute $a$ is in $x$, and $0$ otherwise.
    We express $Z_a(x)$ in terms of $Z_L(x)$ and $f_a(x)$:
    \begin{equation*}
        Z_a(x) = f_a(x) Z_L(x) + ( 1 - f_a(x)) \bar{Z}_a,
    \end{equation*}
    since $U_y(a)$ is the frequency of attribute $a$ in class $y$.
    Expanding the LHS, we have:
    \begin{equation*}
        \begin{aligned}
            \E_{x \in \gX_y} \left[ \left\| Z_a(x) - \bar{Z}_a \right\|^2 \right] & = \E_{x \in \gX_y} \left[ \left\| f_a(x) Z_L(x) + ( 1 - f_a(x)) \bar{Z}_a, - \bar{Z}_a \right\|^2 \right] \\
            & = \E_{x \in \gX_y} \left[ f_a(x)^2  \left\| Z_L(x) - \bar{Z}_a \right\|^2 \right] \\
            & = U_y(a) \E_{x \in \gX_y} \left[ \left\| Z_L(x) - \bar{Z}_a \right\|^2 \mid f_a(x) = 1 \right] \\
            & \leq U_y(a) \E_{x \in \gX_y} \left[ \left\| Z_L(x) - \mu_L \right\|^2 \right],
        \end{aligned}
    \end{equation*}
    where the second equality holds since $f_a(x)^2 = f_a(x)$ and $\E_{x \in \gX_y} [f_a(x)] = U_y(a)$, the last inequality is justified based on a mild assumption that $\bar{Z}_a$ is closer to the class-specific mean embeddings than $\mu_L$ for a descriptive attribute.
    
    Applying Eq.~(\ref{eq:proof_int_1}) and Eq.~(\ref{eq:proof_int_2}) to Eq.~(\ref{eq:proof_int_0}):
    \begin{equation*}
        \begin{aligned}
            \E_{x \in \gX_y} \left[ \left\| Z_A(x) - \mu_A \right\|^2 \right] & = \frac{1}{|\gA_{\rm des}(y)|} \sum_{a, a^\prime \in \gA_{\rm des}(y)} \E_{x \in \gX_y} \left[ (Z_a(x) - \bar{Z}_a)^\top(Z_{a^\prime}(x) - \bar{Z}_{a^\prime}) \right] \\
            & \leq \frac{1}{|\gA_{\rm des}(y)|} \sum_{a, a^\prime \in \gA_{\rm des}(y)} \sqrt{U_y(a) U_y(a^\prime)} \E_{x \in \gX_y} \left[ \left\| Z_L(x) - \mu_L \right\|^2 \right] \\
            & \leq \frac{|\gA_{\rm des}(y)|^2}{|\gA_{\rm des}(y)|^2} \E_{x \in \gX_y} \left[ \left\| Z_L(x) - \mu_L \right\|^2 \right] \\
            & = \E_{x \in \gX_y} \left[ \left\| Z_L(x) - \mu_L \right\|^2 \right].
        \end{aligned}
    \end{equation*}
    The second inequality follows that $U_y(a) \leq 1$ for all attributes.
    By definition of $\Tr(\cdot)$, we have derived $\Tr(\Sigma_A) \leq \Tr(\Sigma_L)$.
\end{proof}

\begin{lemma}[{\it cf.} Lemma \ref{lem:Dist}]
    Let $\gA_{\rm dist}(y) \subseteq \gA(y)$ be the set of distinctive attributes for class $y$ as with Definition~\ref{def:Dist}.
    Let $d_{\rm A}(y, y^{\prime})$ and $d_{\rm L}(y, y^{\prime})$ be $\ell^2$ distance between mean embeddings of two classes $y \neq y^{\prime}$, optimized with respect to $\gA_{\rm dist}(y)$ and $y$.
    Then, for any $y, y^{\prime} \in \gY$, we have $d_{\rm A} \left( y, y^{\prime} \right) \geq d_{\rm L} \left( y, y^{\prime} \right)$ if $|\gA_{\rm dist}(y)| > |\gY|$.
\end{lemma}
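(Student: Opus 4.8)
The plan is to mirror the proof of Lemma~\ref{lem:Des}, but now bounding a distance between class means from \emph{below} rather than a trace from above. First I would fix the same notation: let $Z_L, Z_A, Z_a : \gX \to \sR^d$ be the label-based, distinctive-attribute-based, and single-attribute embedding maps, write $\mu_L(y) = \E_{x \in \gX_y}[Z_L(x)]$, $\mu_A(y) = \E_{x \in \gX_y}[Z_A(x)]$, and $\bar{Z}_a = \E_{x \in \gX_y}[Z_a(x)]$ for the per-attribute class mean. By Definition~\ref{def:Dist}, $Z_A(x) = \frac{1}{|\gA_{\rm dist}(y)|} \sum_{a \in \gA_{\rm dist}(y)} Z_a(x)$ on $\gX_y$, so $\mu_A(y) = \frac{1}{|\gA_{\rm dist}(y)|} \sum_{a \in \gA_{\rm dist}(y)} \bar{Z}_a$, and likewise $\mu_A(y')$ is the centroid of $\{\bar{Z}_{a'}\}_{a' \in \gA_{\rm dist}(y')}$. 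Using the pointwise decomposition $Z_a(x) = f_a(x) Z_L(x) + (1 - f_a(x)) \bar{Z}_a$ from Lemma~\ref{lem:Des}, optimization toward $\gA_{\rm dist}(y)$ drives $\mu_A(y)$ toward the centroid of the text embeddings $\{Z_{\rm txt}(a) : a \in \gA_{\rm dist}(y)\}$, and similarly for $\mu_A(y')$.

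The core of the argument is then comparing two inner products. On one hand, expand $d_{\rm A}(y, y')^2 = \|\mu_A(y)\|^2 + \|\mu_A(y')\|^2 - 2\langle \mu_A(y), \mu_A(y') \rangle$; since $\gA_{\rm dist}(y)$ collects the attributes with $V_y$ maximal --- present in $y$ but essentially absent from the other $|\gY| - 1$ classes --- the sets $\gA_{\rm dist}(y)$ and $\gA_{\rm dist}(y')$ can overlap only in few elements, and this is exactly where the hypothesis $|\gA_{\rm dist}(y)| > |\gY|$ is used: there are strictly more distinctive attributes for $y$ than there are classes that might ``claim'' them, so the averaged cross term $\langle \mu_A(y), \mu_A(y') \rangle$ is small. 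On the other hand, $d_{\rm L}(y, y')^2 = \|\mu_L(y)\|^2 + \|\mu_L(y')\|^2 - 2\langle \mu_L(y), \mu_L(y') \rangle$ with $\mu_L(y) \approx Z_{\rm txt}(\texttt{TP}(y))$, and the near-identical syntax of the template prompts $\texttt{TP}(y)$ and $\texttt{TP}(y')$ (the limitation of $\texttt{TP}$ discussed in Section~\ref{sec:prelim}) makes $\langle \mu_L(y), \mu_L(y') \rangle$ nearly maximal. Subtracting, the large $\mu_L$-cross-term against the small $\mu_A$-cross-term gives $d_{\rm A}(y, y') \geq d_{\rm L}(y, y')$ once the embedding norms are normalized consistently, as they are under CLIP's cosine geometry.

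The main obstacle I anticipate is the direction of the inequality. Reducing intra-class variance in Lemma~\ref{lem:Des} was an upper bound obtainable from Jensen and Cauchy-Schwarz, but increasing inter-class distance is a lower bound, and lower-bounding $\|\mu_A(y) - \mu_A(y')\|$ forces one to rule out accidental alignment of the two attribute-centroids. Concretely, I would need a mild assumption --- analogous to the one invoked at the end of the Lemma~\ref{lem:Des} proof that $\bar{Z}_a$ is closer to the class mean than $\mu_L$ --- namely that text embeddings of attributes which almost never co-occur across classes have controlled, small (ideally non-positive) cosine similarity, so that the purely combinatorial ``minimal overlap'' of $\gA_{\rm dist}(y)$ and $\gA_{\rm dist}(y')$ actually yields geometric separation of $\mu_A(y)$ and $\mu_A(y')$. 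Turning the hypothesis $|\gA_{\rm dist}(y)| > |\gY|$ into a quantitative bound on the overlap fraction, and hence on the cross term, is the delicate step; once that is in place the rest is bookkeeping parallel to the descriptive-attribute case.
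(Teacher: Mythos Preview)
Your route is genuinely different from the paper's. The paper does not expand $d_A^2$ via the polarization identity and bound the cross term $\langle \mu_A(y),\mu_A(y')\rangle$. Instead it introduces an explicit feature map $\phi(z)=\bigl[\sqrt{1-\mathrm{sim}(z,T_A(a))}\bigr]_{a\in\gA}$, one coordinate per attribute in $\gA=\bigcup_{y}\gA_{\rm dist}(y)$, and an analogous label-indexed map $\phi_L$ with $|\gY|$ coordinates, both declared isometric with respect to cosine similarity. The comparison of $d_A$ and $d_L$ then becomes a comparison of $\|\phi(\mu_A(y))-\phi(\mu_A(y'))\|^2$ with $\|\phi_L(\mu_L(y))-\phi_L(\mu_L(y'))\|^2$. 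The first sum is split into the $|\gA_{\rm dist}(y)|$ terms from $y$'s own distinctive attributes plus a non-negative remainder; the second has $|\gY|$ terms. The paper argues that the per-term average $\bar S_A$ exceeds $\bar S_L$ because $\E_{x\in\gX_y}[f_a(x)]>\E_{x\in\gX_{y'}}[f_a(x)]$ for distinctive $a$, and then invokes $|\gA_{\rm dist}(y)|>|\gY|$ purely as a term-count inequality: more summands, each on average larger, hence a larger total.

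This is also where your proposal has a gap. You read $|\gA_{\rm dist}(y)|>|\gY|$ as a pigeonhole constraint limiting the overlap $\gA_{\rm dist}(y)\cap\gA_{\rm dist}(y')$, but the hypothesis says nothing about that intersection; small overlap, to the extent it holds, comes from Definition~\ref{def:Dist} (high $V_y$ means low presence in other classes), not from the cardinality condition. So in your plan the hypothesis is left without a real job, and the step ``turn $|\gA_{\rm dist}(y)|>|\gY|$ into a quantitative bound on the overlap fraction'' that you flag as delicate does not actually go through. The paper's use of the same hypothesis is just a dimension count comparing the number of coordinates of $\phi$ and $\phi_L$. To be fair, the paper's argument is itself heuristic (the isometry of $\phi,\phi_L$ and the inequality $\bar S_A>\bar S_L$ are asserted rather than proved), so neither route is fully rigorous; but if you want to match the paper, the missing ingredient is the $\phi/\phi_L$ construction and the term-count reading of the hypothesis, not a cross-term bound in the original embedding space.
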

\begin{proof}
    Let $\gX_y$ be the set of all images belonging to class $y$.
    We begin by defining the following: let $Z_L: \gX \to \sR^d$ be the label-based embedding function, let $Z_A: \gX \to \sR^d$ be the attribute-based embedding function.
    Let $T_L$ and $T_A$ be the text embedding functions for prompted class labels and attributes, respectively.
    
    For any class $y \in \gY$, define: $\mu_L(y) = \E_{x \in \gX_y}[Z_L(x)]$ and $\mu_A(y) = \E_{x \in \gY_x}[Z_A(x)]$.
    For any two classes $y, y^\prime \in \gY$, define distances by 
    \begin{equation*}
        \begin{aligned}
            d_L(y, y^\prime) & = \left\| \mu_L(y) - \mu_L(y^\prime) \right\|, \\
            d_A(y, y^\prime) & = \left\| \mu_A(y) - \mu_A(y^\prime) \right\|.
        \end{aligned}
    \end{equation*}

    By Definition~\ref{def:Dist}, for any $a \in \gA_{\rm dist}(y)$ and $y \neq y^\prime$, $y, y^\prime \in \gY$ we have:
    \begin{equation*}
        \E_{x \in \gX_y} \left[ f_a(x) \right] > \E_{x \in \gX_y^{\prime]}} \left[ f_a(x) \right].
    \end{equation*}
    Then, we derive an inequality regarding the similarity:
    {\footnotesize \begin{equation}\label{eq:proof_sim}
        \E_{x \in \gX_y} \left[ \text{sim}(Z_A(x), T_A(a)) \right] > \E_{x \in \gX_y} \left[ \text{sim}(Z_A(x), T_A(a)) \right],
    \end{equation}}
    for any $a \in \gA_{\rm dist}(y)$ and $y \neq y\prime$,
    where $Z_L(x) = \arg \max_z \text{sim}(z, T_L(y))$ and $Z_A(x) = \arg \max_z \frac{1}{\gA_{\rm dist}(y)} \sum_{a \in \gA_{\rm dist}(y)} \text{sim}(z, T_A(a))$, $\text{sim}(\cdot, \cdot)$ denotes the cosine similarity.

    We then define a transformation $\phi(\cdot)$ that maps the embeddings to a space where the Euclidean distance corresponds to the dissimilarity in the original space, which is \emph{isometric} with respect to the cosine similarity in the embedding space.
    In this space, each dimension corresponds to the dissimilarity with a specific attribute, preserving the similarity between $z$ and each attribute embedding, such that
    \begin{equation*}
        \phi(z) = \left[ \sqrt{1 - \text{sim}(z, T_A(a_1))}, \dots, \sqrt{1 - \text{sim}(z, T_A(a_{|\gA|}))} \right],
    \end{equation*}
    where $\gA = \bigcup_{y \in \gY} A_{\rm dist}(y)$.
    Then, for any two classes $y$ and $y^\prime$, the distance between their mean embeddings:
    \begin{equation*}
        \begin{aligned}
            & \left\| \phi(\mu_A(y)) - \phi(\mu_A(y^\prime)) \right\|^2 \\
            & = \sum_{i=1}^{|\gA|} \left( \sqrt{1 - \text{sim}(\mu_A(y), T_A(a_i))} - \sqrt{1 - \text{sim}(\mu_A(y^{\prime}), T_A(a_i))} \right)^2 \\
            & = \sum_{a_i \in \gA_{\rm dist}(y)} \left( \sqrt{1 - \text{sim}(\mu_A(y), T_A(a_i))} - \sqrt{1 - \text{sim}(\mu_A(y^{\prime}), T_A(a_i))} \right)^2 \\
            & \quad + \sum_{a_j \notin \gA_{\rm dist}(y)} \left( \sqrt{1 - \text{sim}(\mu_A(y), T_A(a_j))} - \sqrt{1 - \text{sim}(\mu_A(y^{\prime}), T_A(a_j))} \right)^2.
        \end{aligned}
    \end{equation*}
    Referring to Eq.~(\ref{eq:proof_sim}), we know that the inequality relationship accumulates because of the summation over all $|\gA|$ terms.

    We then define a similar transformation for the label-based method:
    \begin{equation*}
        \begin{aligned}
        \phi_L(z) & = \left[ \sqrt{1 - \text{sim}(z, T_L(y_1))}, \dots, \sqrt{1 - \text{sim}(z, T_L(y_{|\gY|}))} \right].
        \end{aligned}
    \end{equation*}
    Accordingly, the distance between mean embeddings under label-based methods:
    \begin{equation*}
        \begin{aligned}
            \left\| \phi_L(\mu_L(y)) - \phi_L(\mu_L(y^{\prime})) \right\| & = \sum_{i=1}^{|\gY|} \left( \sqrt{1 - \text{sim}(\mu_L(y), T_L(y_i))} - \sqrt{1 - \text{sim}(\mu_L(y^{\prime}), T_L(y^{\prime}_i))} \right)^2.
        \end{aligned}
    \end{equation*}
    Denote $S_A \triangleq \sum_{a_i \in \gA_{\rm dist}(y)} \left( \sqrt{1 - \text{sim}(\mu_A(y), T_A(a_i))} - \sqrt{1 - \text{sim}(\mu_A(y^{\prime}), T_A(a_i))} \right)^2$ and $S_L \triangleq \left\| \phi_L(\mu_L(y)) - \phi_L(\mu_L(y^{\prime})) \right\|$, and the average contribution of each component by $\bar{S}_A = \frac{1}{|\gA_{\rm dist}(y)|} S_A$ and $\bar{S}_L = \frac{1}{|\gY|} S_L$, it is easy to check that $\bar{S}_A > \bar{S}_L$, because $\E_{x \in \gX_y}[f_{a_i}(x)] > \E_{x \in \gX_y^{\prime}}[f_{a_i}(x)]$.
    
    Then, as we assume $|\gA_{\rm dist}(y)| > |\gY|$ (this assumption is mild in common practical classification tasks, where $|\gY|$ often ranges from $10$ to $100$, whereas we can easily identify $|\gA_{\rm dist}(y)| > 100$ distinctive attributes for a class of objects because of the dimensionality of natural language vocabulary and sentences.), we have concluded that $S_A > S_L$, implying that 
    \begin{equation*}
        \begin{aligned}
            \sum_{a_i \in \gA_{\rm dist}(y)} & \left( \sqrt{1 - \text{sim}(\mu_A(y), T_A(a_i))} - \sqrt{1 - \text{sim}(\mu_A(y^{\prime}), T_A(a_i))} \right)^2 \\
            & > \sum_{i=1}^{|\gY|} \left( \sqrt{1 - \text{sim}(\mu_L(y), T_L(y_i))} - \sqrt{1 - \text{sim}(\mu_L(y^{\prime}), T_L(y^{\prime}_i))} \right)^2.
        \end{aligned}
    \end{equation*}
    As $\sum_{a_j \notin \gA_{\rm dist}(y)} \left( \sqrt{1 - \text{sim}(\mu_A(y), T_A(a_j))} - \sqrt{1 - \text{sim}(\mu_A(y^{\prime}), T_A(a_j))} \right)^2$ is non-negative, we arrive at
    $\left\| \phi(\mu_A(y)) - \phi(\mu_A(y^\prime)) \right\|^2 > \left\| \phi_L(\mu_L(y)) - \phi_L(\mu_L(y^{\prime})) \right\|$.

    Recall that transformations $\phi(\cdot)$ and $\phi_L(\cdot)$ are isometric with respect to the cosine similarity, we have
    \begin{equation*}
        \begin{aligned}
            \left\| \phi(\mu_A(y)) - \phi(\mu_A(y^\prime)) \right\| = c \cdot d_A(y, y^{\prime}), \\
            \left\| \phi(\mu_L(y)) - \phi(\mu_L(y^\prime)) \right\| = c \cdot d_L(y, y^{\prime}),
        \end{aligned}
    \end{equation*}
    for some constant $c > 0$.
    Dividing both sides by $c$, we can conclude $d_A(y, y^{\prime}) > d_L(y, y^{\prime})$.
\end{proof}

\section{Appendix 3: More Experimental Results}
\subsection{More Results of Different Backbones}
\label{app:backbones}
\begin{table}[!h]
\centering
\caption{Accuracy comparison using {RN50-based} CLIP as the pre-trained model (Mean \% ± Std \%, ours are \textcolor{black}{\colorbox{gray!30}{highlighted}} and the highest is in \textbf{bold}).}
\vspace{-0.3cm}
\resizebox{\textwidth}{!}{
\begin{tabular}{c|cccccccccccc|c}
\toprule
RN50 & Aircraft      & Caltech       & Cars          & DTD      & ESAT       & Flowers       & Food          & Pets          & SUN           & UCF           & IN      & Resisc        & Avg.          \\
\midrule
ZS         & 15.5          & 82.1          & 56.2          & 37.5          & 29.2          & 58.0          & 75.8          & 79.7          & 56.1          & 57.6          & 55.5          & 37.7          & 53.4          \\
\rowcolor{lightgray}
AttrZS     & 19.4          & 87.1          & \textbf{56.5} & 51.9          & 34.8          & \textbf{75.1} & \textbf{78.0} & 88.3          & \textbf{60.4}          & 60.9          & \textbf{61.2}          & 45.8          & 59.9          \\
\midrule
VP         & 16.2          & 80.1          & 44.0          & 43.4          & 59.7          & 53.6          & 65.3          & 77.2          & 48.8          & 52.0          & 49.7          & 47.7          & 53.2          \\
AR         & 18.6          & 86.5          & 53.9          & 46.4          & 66.6          & 60.9          & 74.2          & 82.5          & 56.8          & 59.7          & 54.4          & \textbf{58.4} & 59.9          \\
\rowcolor{lightgray}
AttrVR     & \textbf{20.7} & \textbf{89.1} & 53.9          & \textbf{54.4} & \textbf{72.0} & 74.8          & 75.3          & \textbf{88.9} & 59.9 & \textbf{63.6} & 59.2 & 58.2          & \textbf{64.2} \\
\bottomrule
\end{tabular}}
\label{tab:rn50}
\end{table}

\begin{table}[!h]
\centering
\caption{Accuracy comparison using {RN101-based} CLIP as the pre-trained model (Mean \% ± Std \%, ours are \textcolor{black}{\colorbox{gray!30}{highlighted}} and the highest is in \textbf{bold}).}
\vspace{-0.3cm}
\resizebox{\textwidth}{!}{
\begin{tabular}{c|cccccccccccc|c}
\toprule
RN101   & Aircraft      & Caltech       & Cars          & DTD       & ESAT       & Flowers       & Food          & Pets          & SUN           & UCF           & IN            & Resisc        & Avg.          \\
\midrule
ZS     & 17.1          & 86.0          & \textbf{63.9} & 39.0          & 28.1          & 59.7          & 79.6          & 81.9          & 56.5          & 58.4          & 58.7          & 44.4          & 56.1          \\
\rowcolor{lightgray}
AttrZS & 20.8          & 90.9          & 63.0 & 51.4          & 35.1          & 75.8 & \textbf{81.2} & \textbf{89.5} & \textbf{62.1}          & 63.7          & \textbf{63.8} & 51.2          & 62.4          \\
\midrule
VP     & 19.3          & 83.0          & 53.7          & 43.4          & 62.8          & 57.2          & 71.2          & 80.2          & 53.5          & 54.2          & 53.1          & 54.0          & 57.1          \\
AR     & 19.5          & 89.7          & 62.0          & 46.3          & \textbf{70.4} & 60.4          & 78.0          & 84.4          & 58.4          & 60.6          & 57.9          & 60.2 & 62.3          \\
\rowcolor{lightgray}
AttrVR & \textbf{23.3} & \textbf{92.0} & 62.2          & \textbf{55.6} & 70.3 & \textbf{76.2} & 79.5          & 89.3 & \textbf{62.1} & \textbf{64.5} & 62.2 & \textbf{64.5} & \textbf{66.8}
 \\
\bottomrule
\end{tabular}}
\label{tab:rn101}
\end{table}

\begin{table}[!h]
\centering
\caption{Accuracy comparison using ViT-B32-based CLIP as the pre-trained model (Mean \% ± Std \%, ours are \textcolor{black}{\colorbox{gray!30}{highlighted}} and the highest is in \textbf{bold}).}
\vspace{-0.3cm}
\resizebox{\textwidth}{!}{
\begin{tabular}{c|cccccccccccc|c}
\toprule
ViT-B32   & Aircraft      & Caltech       & Cars          & DTD       & ESAT       & Flowers       & Food          & Pets          & SUN           & UCF           & IN            & Resisc        & Avg.          \\
\midrule
ZS     & 18.3          & 89.4          & \textbf{60.1} & 40.0          & 37.0          & 60.8          & 79.2 & 82.5          & 59.4          & 61.4          & 60.1          & 49.8          & 58.2          \\
\rowcolor{lightgray}
AttrZS & 22.2          & 91.5          & 59.2 & 50.7          & 44.3          & 76.0 & \textbf{81.0} & 89.5 & \textbf{63.6} & 66.8          & \textbf{64.3} & 56.5          & 63.8          \\
\midrule
VP     & 24.3          & 92.3 & 58.6          & 54.9          & 85.9          & 71.2          & 75.0          & 86.8          & 61.0          & 67.3          & 59.0          & \textbf{73.9}          & 67.5          \\
AR     & 21.8          & \textbf{92.7 }         & 56.9          & 49.9          & 85.6 & 66.7          & 75.7          & 84.7          & 59.9          & 63.5          & 57.5          & 71.6 & 65.5          \\
\rowcolor{lightgray}
AttrVR & \textbf{24.5} & 92.0 & 56.6          & \textbf{56.8} & \textbf{88.6} & \textbf{77.8} & 77.2          & \textbf{89.8} & 62.8 & \textbf{67.9} & 61.0 & \textbf{73.9} & \textbf{69.1}
 \\
\bottomrule
\end{tabular}}
\label{tab:vitb32}
\end{table}

\begin{table}[!h]
\centering
\caption{Accuracy comparison using ViT-L14-based CLIP as the pre-trained model (Mean \% ± Std \%, ours are \textcolor{black}{\colorbox{gray!30}{highlighted}} and the highest is in \textbf{bold}).}
\vspace{-0.3cm}
\resizebox{\textwidth}{!}{
\begin{tabular}{c|cccccccccccc|c}
\toprule
ViT-L141   & Aircraft      & Caltech       & Cars          & DTD       & ESAT       & Flowers       & Food          & Pets          & SUN           & UCF           & IN            & Resisc        & Avg.          \\
\midrule
ZS     & 29.7          & 90.4          & \textbf{77.1} & 51.1          & 55.3          & 73.7          & 88.9          & 89.0          & 65.0          & 72.9          & 71.3          & 60.4          & 68.7          \\
\rowcolor{lightgray}
AttrZS & 35.3          & 94.4          & 77.0          & \textbf{61.1} & 54.4          & 85.6          & \textbf{91.4} & 94.2          & \textbf{69.8} & 75.2          & \textbf{76.2} & 63.9          & 73.2          \\
\midrule
VP     & 25.7          & 88.3          & 59.8          & 45.5          & 30.9          & 69.1          & 74.9          & 84.5          & 57.6          & 63.4          & 58.8          & \textbf{74.4} & 61.1          \\
AR     & 31.7          & 93.0          & 75.6          & 55.9          & 70.4          & 74.7          & 89.0          & 91.5          & 65.4          & 73.9          & 69.7          & 72.2          & 71.9          \\
\rowcolor{lightgray}
AttrVR & \textbf{38.2} & \textbf{96.1} & 74.8          & \textbf{61.1} & \textbf{74.9} & \textbf{85.8} & 90.0          & \textbf{94.3} & 68.6          & \textbf{77.8} & 73.8          & 70.1          & \textbf{75.5}
 \\
\bottomrule
\end{tabular}}
\label{tab:vitl14}
\end{table}
Tables \ref{tab:rn50}-\ref{tab:vitl14} present the results of different methods using various architectures of the CLIP image encoder. For all models, we employed the same VR parameter numbers and hyper-parameter settings as those used in ViT-B16. In this configuration, the VR method requires downscaling the images and adding trainable noise at the edges or directly on the images, which can sometimes adversely affect image quality and, consequently, classification results. As a result, on certain benchmarks that demand high detail and resolution, such as Cars and Food, the accuracy may be lower than that of zero-shot learning. In summary, the following observations can be drawn from the tables: 
\begin{itemize}
    \item When comparing zero-shot methods, our proposed approach, AttrZS, which uses DesAttrs and DistAttrs in conjunction with k-nearest neighbor attribute selection, achieves an average accuracy improvement of 4.5\% to 6.5\% across 12 benchmarks compared to label-based zero-shot classification. This clearly demonstrates the effectiveness of using attributes instead of labels. 
    \item When comparing VR for CLIP methods, even in cases where the baseline VR methods, such as VP and AR, exhibit underfitting with RN50 or overfitting with ViT-L14, replacing label-based VR with AttrVR consistently improves average accuracy by 3.6\% to 4.5\%. 
    \item Furthermore, AttrVR outperforms AttrZS, especially in downstream tasks where there is a significant domain shift between the task domain and the CLIP pretraining domain (e.g., ESAT and Resisc). This advantage is even more pronounced when using a small image encoder (e.g., RN50), in the pre-trained CLIP model.
\end{itemize}

\subsection{More Results of Top Matched Attributes}
\label{app:vis}
\begin{figure}[!h]
    \centering
    \includegraphics[width=\linewidth]{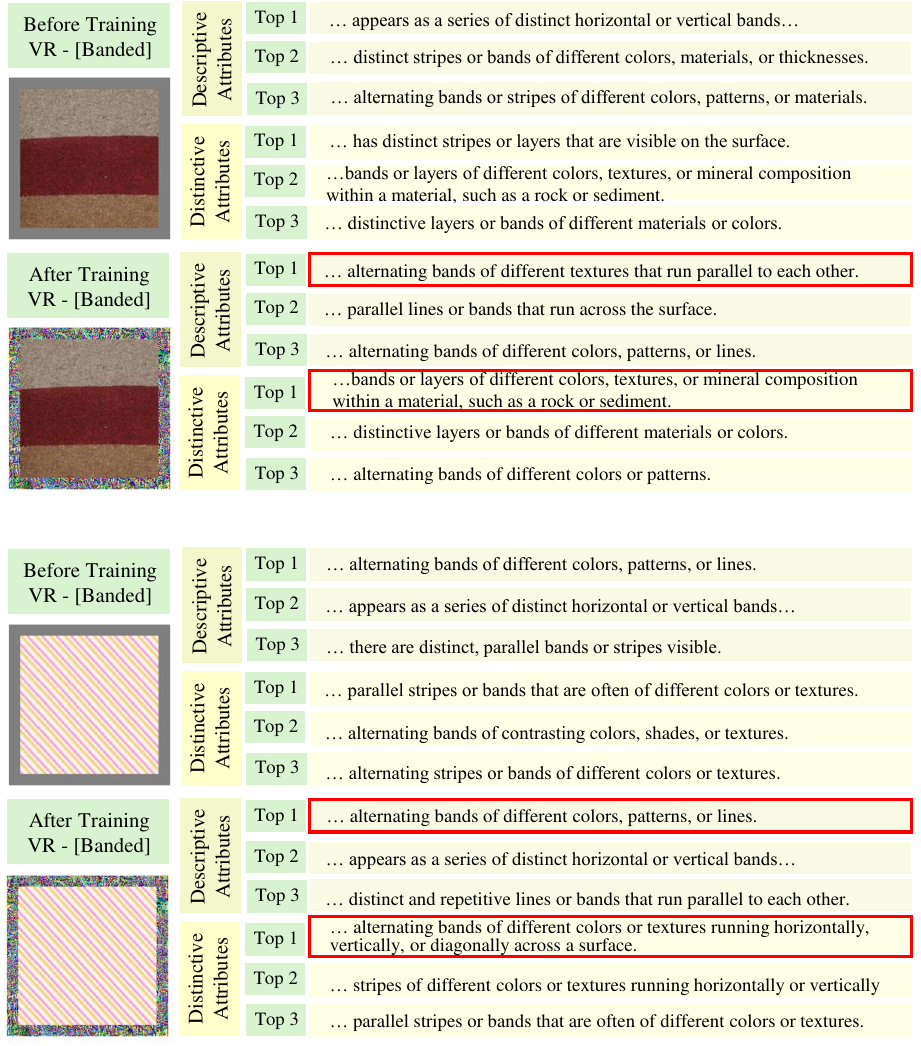}
    \caption{Visualization of images with AttrVR patterns, and their nearest $k=3$ DesAttrs and DistAttrs before and after training VR patterns, using the ViT-B16-based CLIP as the pre-trained model. Two images labeled `Banded' from the Texture task are chosen as examples. The closest DesAttr and DistAttr after training convergence of VR patterns are highlighted with red borders.}
    \label{fig:vis_dtd}
\end{figure}

\begin{figure}[!h]
    \centering
    \includegraphics[width=\linewidth]{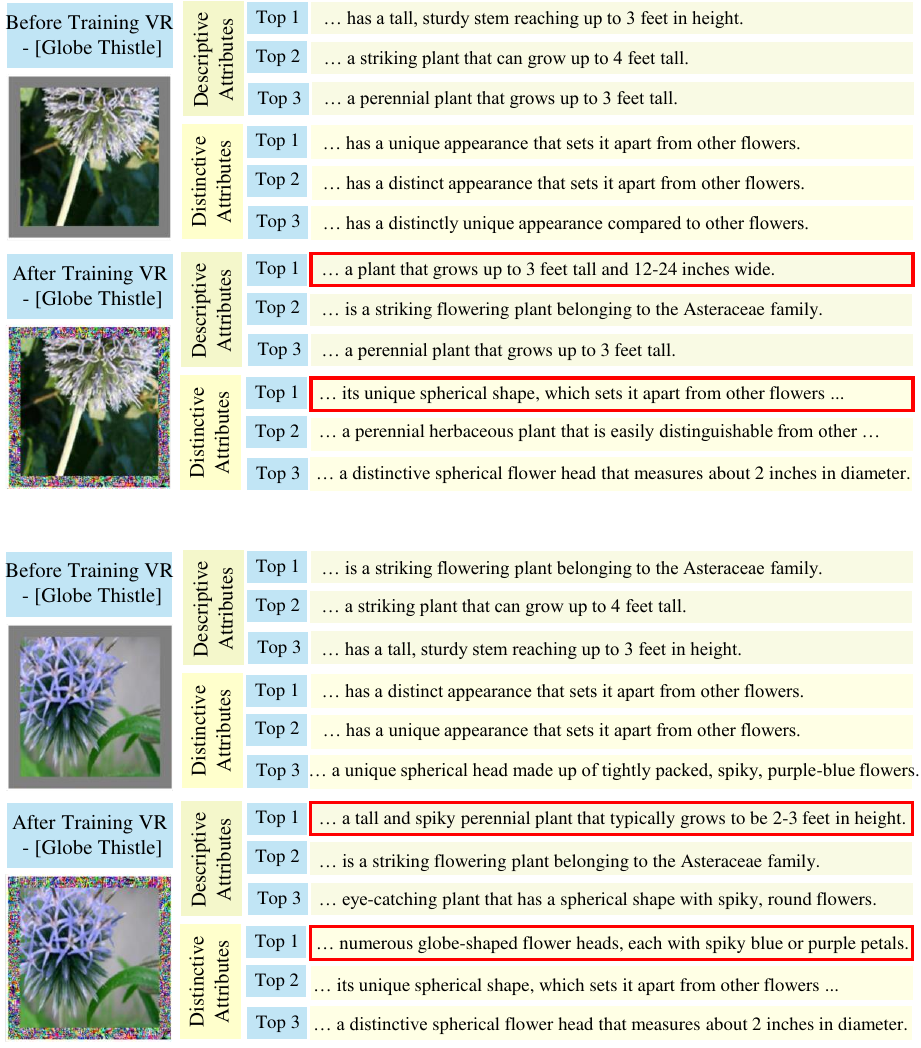}
    \caption{Visualization of images with AttrVR patterns, and their nearest $k=3$ DesAttrs and DistAttrs before and after training VR patterns, using the ViT-B16-based CLIP as the pre-trained model. Two images labeled `Globe Thistle' from the Flowers task are chosen as examples. The closest DesAttr and DistAttr after training convergence of VR patterns are highlighted with red borders.}
    \label{fig:vis_flowers}
\end{figure}

Figure \ref{fig:vis_dtd} and Figure \ref{fig:vis_flowers} show the visualization of image samples with AttrVR patterns, and their nearest $k=3$ DesAttrs and DistAttrs before and after training VR patterns. To illustrate the differences between individual samples within the same class, we selected two samples from the same task and class for demonstration. Figure \ref{fig:vis_dtd} shows two samples labeled as `banded' from the Texture task, while Figure \ref{fig:vis_flowers} displays two samples labeled as `Globe Thistle' from the Flowers task. From the visualization results, we can draw the following conclusions:

\textit{The necessity of the Iterative Updating Strategy}: It is evident that the DesAttrs and DistAttrs closest to the same training sample differ before and after VR pattern training. Prior to training, the attribute descriptions that are closer to the sample often share similar keywords; for instance, the DesAttrs describing `banded' tend to emphasize `different bands' (Figure \ref{fig:vis_dtd}),  while those describing `Globe Thistle' focus on the height of the flowers (Figure \ref{fig:vis_flowers}). However, after VR pattern training, the DesAttrs and DistAttrs that are close to the same sample or different samples vary significantly.

\textit{The necessity of both DesAttrs and DistAttrs}: DesAttrs and DistAttrs have different focal points. DesAttrs primarily describe the overall characteristics; for example, shown in Figure \ref{fig:vis_flowers}, in the description of Globe Thistle, keywords like `tall' and `3 feet' would appear. In contrast, DistAttrs emphasize features that distinguish Globe Thistle from other flower categories, such as `spherical shape' and `globe-shaped'. During the training of the VR pattern, both the overall information of individual classes and the distinguishing features between different classes are needed. This aligns with the experimental results in Table \ref{tab:ablation}.

\textit{Combining DistAttrs with k-nearest neighbor attribute selection enhances the ability to capture unique features of individual samples}: Some distinguishing characteristics of a specific class may not be universal (i.e., some samples possess these features while others do not). In such cases, AttrVR employs k-nearest neighbor attribute selection to filter out attributes that are unique to individual samples. For instance, in the `Banded' examples in Figure \ref{fig:vis_dtd}, the first image depicts land sediment, leading to the presence of the keyword `rock or sediment', while the second image features diagonal stripes, resulting in the closest DistAttr including the keyword `diagonal'.

\subsection{More Results of Aggregation Studies}
\label{app:exp_aggr}
\begin{figure}[!h]
    \centering
    \includegraphics[width=0.85\linewidth]{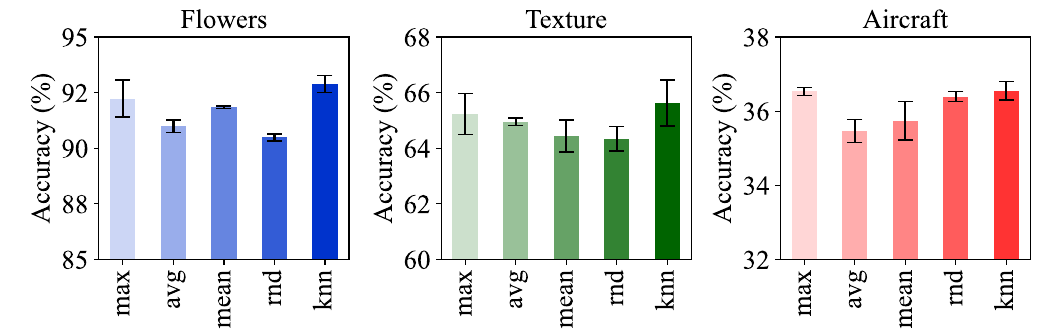} 
    \captionof{figure}{{Results of aggregation studies, using the ViT-B16-based CLIP as the pre-trained model. `\textit{Max}' calculates the maximum attribute similarity, `\textit{avg}' calculates the average attribute similarity, `\textit{mean}' calculates the similarity of the mean attribute, `\textit{rnd}' calculates the average similarity of randomly selected $k$ attributes, and `\textit{knn}' represents the $k$-nearest iterative updating strategy applied in AttrVR.}}
    \label{fig:aggre}
\end{figure}
Aside from the $k$ nearest neighbor (\textit{knn}) attribute selection method applied in AttrVR, we also conduct some aggregation studies replacing  $f_{\rm knn}$ with the following modules to test its impact:

\textbf{The maximum similarity (\textit{max}).} Unlike AttrVR, in this experiment, we retained only the nearest attribute with the maximum similarity in the \textit{knn} attribute selection phase for subsequent calculations. The specific logits output ${\text{sim}}^{\rm max}_{\rm Attr}(x_i^{\rm T}, y^{\rm T} | \delta^{(e)})$ given sample $x_i^{\rm T}$, label $y^{\rm T} \in \mathcal{Y}^{\rm T}$ with pattern $\delta^{(e)}$ can be expressed as:
\begin{align}
    \notag &
    \text{sim}^{\rm max}_{\rm Attr}(x_i^{\rm T}, y^{\rm T} | \delta^{(e)}) = {\lambda} \text{sim}_{\rm CLIP}(\tilde{x}_i^{\rm T}, a_{\rm des} | \delta^{(e)}) + {(1 - \lambda)}  \text{sim}_{\rm CLIP}(\tilde{x}_i^{\rm T}, a_{\rm dist} | \delta^{(e)}),
\label{eq:max}
\end{align}
where $\tilde{\gA}_{\rm des}^{k=1} (x_i^{\rm T}, y^{\rm T} | \delta^{(e)})=\{a_{\rm des}\}$ and $\tilde{\gA}_{\rm dist}^{k=1} (x_i^{\rm T}, y^{\rm T} | \delta^{(e)})=\{a_{\rm dist}\}$ can be obtained with Eq.~(\ref{eq:knearest}) setting $k=1$.

\textbf{The average similarity (\textit{avg}).} In this experiment, we do not compare the similarity between various attributes and individual samples; instead, we simply calculate the average similarity between all DesAttrs, DistAttrs and a single sample. The specific logits output $\text{sim}^{\rm avg}_{\rm Attr}(x_i^{\rm T}, y^{\rm T} | \delta^{(e)})$ given sample $x_i^{\rm T}$, label $y^{\rm T} \in \mathcal{Y}^{\rm T}$ with pattern $\delta^{(e)}$ can be expressed as:
\begin{align}
    \notag
    \text{sim}^{\rm avg}_{\rm Attr}(x_i^{\rm T}, y^{\rm T} | \delta^{(e)}) = \frac{\lambda}{m} \sum_{a \in \tilde{\gA}_{\rm des}(y^{\rm T})} \text{sim}_{\rm CLIP}(\tilde{x}_i^{\rm T}, a | \delta^{(e)}) + \frac{1 - \lambda}{m} \sum_{a^{\prime} \in \tilde{\gA}_{\rm dist}(y^{\rm T})} \text{sim}_{\rm CLIP}(\tilde{x}_i^{\rm T}, a^{\prime} | \delta^{(e)}),
\end{align}
where $\tilde{\gA}_{\rm des}(y^{\rm T})$ and $\tilde{\gA}_{\rm dist}(y^{\rm T})$ are attributes set with a size of $m$, generated by Eq.~(\ref{eq:attr_gen}).

{\textbf{The random similarity (\textit{rnd}).} In this experiment, we simply calculate the average similarity between $k$ randomly selected DesAttrs, DistAttrs and a single sample, where $k$ is the same hyper-parameter applied in AttrVR. Similarly, the logits output $\text{sim}^{\rm rnd}_{\rm Attr}(x_i^{\rm T}, y^{\rm T} | \delta^{(e)})$ given sample $x_i^{\rm T}$, label $y^{\rm T} \in \mathcal{Y}^{\rm T}$ with pattern $\delta^{(e)}$ can be expressed as:}
\begin{align}
    \notag
    \text{sim}^{\rm rnd}_{\rm Attr}(x_i^{\rm T}, y^{\rm T} | \delta^{(e)}) = \frac{\lambda}{k} \sum_{a \in \texttt{rand}(\tilde{\gA}_{\rm des}(y^{\rm T}), k)} \text{sim}_{\rm CLIP}(\tilde{x}_i^{\rm T}, a | \delta^{(e)}) + \\
    \frac{1 - \lambda}{k} \sum_{a^{\prime} \in \texttt{rand}(\tilde{\gA}_{\rm dist}(y^{\rm T}), k)} \text{sim}_{\rm CLIP}(\tilde{x}_i^{\rm T}, a^{\prime} | \delta^{(e)}),
\end{align}
{where $\tilde{\gA}_{\rm des}(y^{\rm T})$ and $\tilde{\gA}_{\rm dist}(y^{\rm T})$ are attributes set with a size of $m$, generated by Eq.~(\ref{eq:attr_gen}) and $\texttt{rand}(\gA, k)$ is the random sampling function that chooses $k$ items from set $\gA$.}

\textbf{Mean attribute similarity (\textit{mean}).} In this experiment, we remove the \textit{knn} attribute selection moduel. Instead, we first compute the text embedding centers $\mathcal{Z}_{\rm des}(y^{\rm T})$ and $\mathcal{Z}_{\rm dist}(y^{\rm T})$ for the DesAttr and DistAttr sets of label $y^{\rm T} \in \mathcal{Y}^{\rm T}$. We then use these centers to calculate the similarity with each sample in the downstream task and update the VR pattern $\delta^{(e)}$ accordingly. The logits output $\text{sim}^{\rm mean}_{\rm Attr}(x_i^{\rm T}, y^{\rm T} | \delta^{(e)})$ of sample $x_i^{\rm T}$ and label $y^{\rm T}$ can be formulated as:
\vspace{0.1cm}
\begin{align}
    \notag
    & \text{sim}^{\rm mean}_{\rm Attr}(x_i^{\rm T}, y^{\rm T} | \delta^{(e)}) = {\lambda} \text{sim}_{\rm CLIP}(\tilde{x}_i^{\rm T}, \mathcal{Z}_{\rm des}(y^{\rm T}) | \delta^{(e)}) + {(1 - \lambda)}  \text{sim}_{\rm CLIP}(\tilde{x}_i^{\rm T}, \mathcal{Z}_{\rm dist}(y^{\rm T}) | \delta^{(e)}), \\
    \notag
    & \text{where } \mathcal{Z}_{\rm des}(y^{\rm T})= \frac{1}{m}\sum_{a \in \tilde{\gA}_{\rm des}(y^{\rm T})}f_{\rm txt}(a), \text{  } \mathcal{Z}_{\rm dist}(y^{\rm T})= \frac{1}{m}\sum_{a' \in \tilde{\gA}_{\rm dist}(y^{\rm T})}f_{\rm txt}(a').
\end{align}
\textbf{Conclusion.} The results for each module are shown in Figure \ref{fig:aggre}. It can be observed that the \textit{knn} attribute selection module in the current AttrVR achieves the highest accuracy. This is attributed to the fact that the \textit{knn} module filters out redundant or irrelevant feature descriptions, while also determining the sample class based on the nearest attributes, thereby enhancing the robustness of classification.

\subsection{Cross Test Between Label-based VR and AttrVR}
\label{app:ct}
\begin{figure}[!h]
    \centering
    \includegraphics[width=0.7\linewidth]{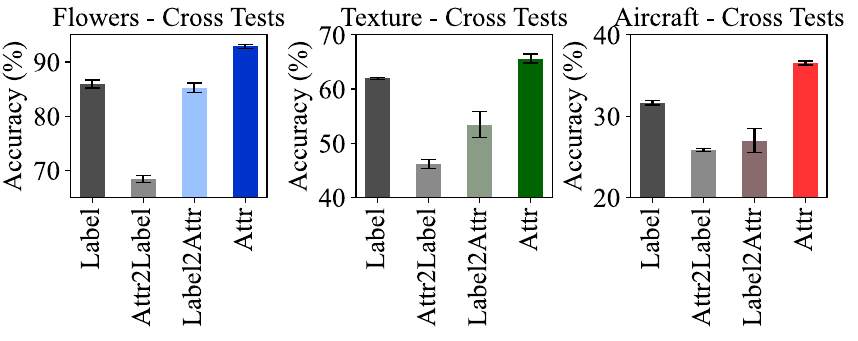} 
    \captionof{figure}{Cross test results between label-based VR and out AttrVR, using the ViT-B16-based CLIP as the pre-trained model. `\textit{Label}' and `\textit{Attr}' respectively show the results of label-based VR (i.e., AR) and AttrVR. `\textit{Attr2Label}' shows the results of classifying downstream images with the AttrVR pattern using a template-prompted label, whereas `\textit{Label2Attr}' shows the results of classifying downstream images with the VR pattern trained by the label-based method using DesAttrs and DistAttrs.}
    \label{fig:ct}
\end{figure}

To demonstrate that the improvement in downstream task accuracy with AttrVR is due to the VR learning guided by attributes, rather than solely the attributes themselves enhancing the zero-shot accuracy during test time, we designed a cross test between the label-based VR method and AttrVR in this section.

We conduct the following experiments: (1) \textit{Label}: Adding the VR pattern learned from labels to the images and classifying the images using the labels; (2) \textit{Label2Attr}: Adding the VR pattern learned from labels to the images and classifying the images using the DesAttrs and DistAttrs presented in this paper; (3) \textit{Attr2Label}: Adding the VR pattern learned from AttrVR to the images and classifying the images using the labels; (4) \textit{Attr}: Adding the VR pattern learned from AttrVR to the images and classifying the images using our DesAttrs and DistAttrs.

The experimental results are shown in Figure \ref{fig:ct}. By analyzing the results, we can draw the following two conclusions: 
\begin{itemize}
    \item The performance improvement of AttrVR arises from the\textit{ VR learning process} based on attributes, rather than from the zero-shot classification performance gains obtained during testing by using attributes. It is evident that the results from \textit{label2Attr} are significantly worse than those from AttrVR, and even slightly inferior to those from the label-based VR method. Therefore, it can be concluded that the major contribution comes from the VR learning process in AttrVR, rather than the attributes used during testing.
    \item The VR patterns learned through label-based VR and AttrVR are not interchangeable. The accuracy observed in the cross test (refer to \textit{Label2Attr} and \textit{Attr2Label} in Figure \ref{fig:ct}) is significantly lower than the accuracies obtained from both label-based VR and AttrVR. This demonstrates that the two VR patterns are not generalizable to one another. Thus, repurposing VLMs using attributes and repurposing VLMs using template-prompted labels are fundamentally different approaches. This further highlights the innovation of AttrVR.
\end{itemize}

\subsection{Analysis of Training Cost}
\label{app:train_cost}
\subsubsection{Time Cost Comparison Between Baselines and AttrVR}
\begin{table}[!h]
\centering
\caption{Training cost of different VR methods, using the ViT-B16-based CLIP as the pre-trained model and the Flowers task as an example.}
\resizebox{0.625\textwidth}{!}{
\begin{tabular}{c|ccc}
\toprule
                      & VP        & AR        & AttrVR    \\
\midrule
Parameter Number                 & 69.8k     & 39.9k     & 39.9k     \\
Training Time for each Epoch (s) & 2.97±0.02 & {2.85±0.03} & {2.83±0.03} \\
Training Time in Total (min)     & 9.78±0.07 & {9.44±0.05} & {9.54±0.04} \\
\bottomrule
\end{tabular}}
\label{tab:cost}
\end{table}
This section provides a summary of the parameter counts and runtime for different VR methods. Experiments are conducted on a single A100 GPU. 

The VP \citep{bahng2022exploring} method employs a noise pattern with a frame size of 30. For an input image size of $224\times224$, the parameter count is calculated as $224\times224\times3-(224-60)\times(224-60)*3=69840$. In contrast, both AR \citep{chen2023understanding,tsai2020transfer} and AttrVR (ours) use a noise pattern with a frame size of 16, resulting in a parameter count of $224\times224\times3-(224-32)\times(224-32)*3=39936$.

Detailed results are presented in Table \ref{tab:cost}. We can draw the following conclusions:
\begin{itemize}
    \item Compared to VP and AR, the additional time incurred by our AttrVR involves calculating the text embeddings for all DesAttrs and DistAttrs once, as well as the $knn$ attributes selection once per epoch (see Algorithm \ref{ag:attrvr}). However, Table \ref{tab:cost} shows that both the time for a single epoch and the total training time are similar between AttrVR and AR. Therefore, the extra time overhead introduced by AttrVR can be considered negligible.
    \item In comparison to VP, AR and AttrVR have fewer parameters, resulting in a lower overall training time.
\end{itemize}

\subsubsection{{Time Cost of AttrVR Using Selecting Modules}}
\begin{table}[h]
\caption{Training cost of different selecting modules of AttrVR, using the ViT-B16-based CLIP as the pre-trained model and the Flowers task as an example.}
\resizebox{\textwidth}{!}{
\begin{tabular}{c|ccccc}
\toprule
                          & AttrVR (w mean) & AttrVR (w avg) & AttrVR (w max) & AttrVR (w rnd) & \makecell{AttrVR (w knn) \\(ours)}\\
\midrule
Batch Forward Time (ms)   & 7.67±0.56       & 7.57±0.43      & 7.30±0.04      & 16.63±0.10     & 7.39±0.04             \\
Training Time for each Epoch (s)            & 2.88±0.08       & 2.84±0.01      & 2.86±0.04      & 2.86±0.03      & 2.83±0.03             \\
Training Time in Total (min) & 9.55±0.12       & 9.51±0.07      & 9.53±0.05      & 9.75±0.12      & 9.54±0.04       \\
\bottomrule
\end{tabular}}
\label{tab:module_cost}
\end{table}

Table~\ref{tab:module_cost} shows the time for using different selection modules in AttrVR (i.e., \textit{mean, avg, max, rnd, knn}, see Appendix~\ref{app:exp_aggr} for module details). It is observed that the additional computational overhead introduced by the \textit{knn} attribute selection is negligible, though it requires sorting and averaging the nearest $k$ attributes for each sample. The reason is shown below:

Assuming there are $n$ samples and $m$ attributes for each class, the time complexity for computing the mean or maximum is $O(nm)$, while the complexity for sorting and averaging the top $k$ attributes is $O(n(m\log m+k))$. Since feature selection does not require training, the difference between these complexities is insignificant. In our case, $m=20, k=3$, making the computational difference almost negligible. Moreover, when compared to the computational cost of the CLIP forward pass, these overheads tend to be trivial.

\subsection{{Impact of Shared and Dataset-specific Hyper-parameters}}
\label{app:hyper}

\begin{table}[h]
\centering
\caption{Differences between shared and dataset-optimized hyper-parameters (using ViT-16-based CLIP as the example).}
\resizebox{\textwidth}{!}{
\begin{tabular}{c|cccccccccccc}
\toprule
& {Aircraft} & {Caltech} & {Cars} & {DTD}  & {ESAT} & {Flowers} & {Food} & {Pets} & {SUN} & {UCF} & {IN} & {Resisc} \\
\midrule
AR (Baseline)                                      & 31.7              & 95.5                & 68.0          & 62.0          & 93.4             & 85.8             & 85.2             & 92.7          & 67.9            & 78.1            & 66.0              & 81.6            \\
AttrVR ($k$=3, $\lambda$=0.5)                      & 36.6              & 95.7                & 68.3          & 65.6          & 93.8             & 92.9             & 85.9             & 93.3          & 69.6            & 79.0            & 69.4              & 82.6            \\
AttrVR (optimized $k$)            & 36.6              & 95.8                & 68.6          & 65.6          & 93.8             & 92.9             & 85.9             & 93.3          & 69.7            & 79.0            & 69.5              & 82.8            \\
AttrVR (optimized $\lambda$)    & 37.0              & 95.9                & 68.5          & 66.0          & 93.8             & 92.9             & 85.9             & 93.3          & 70.0            & 79.0            & 69.5              & 82.6            \\
Dataset-optimized $k$          & 3                 & 1                   & 1             & 3             & 3                & 3                & 3                & 3             & 5               & 3               & 5                 & 1               \\
\makecell{Difference Between \\ Shared and Specific $k$}       & \textbf{0.0}      & \textbf{-0.1}       & \textbf{-0.3} & \textbf{0.0}  & \textbf{0.0}     & \textbf{0.0}     & \textbf{0.0}     & \textbf{0.0}  & \textbf{-0.1}   & \textbf{0.0}    & \textbf{-0.1}     & \textbf{-0.2}   \\
Dataset-optimized $\lambda$           & 0.75              & 0.25                & 0.75          & 0.25          & 0.5              & 0.5              & 0.5              & 0.5           & 0.25            & 0.5             & 0.25              & 0.5             \\
\makecell{Difference Between \\ Shared and Specific $\lambda$} & \textbf{-0.4}     & \textbf{-0.2}       & \textbf{-0.2} & \textbf{-0.4} & \textbf{0.0}     & \textbf{0.0}     & \textbf{0.0}     & \textbf{0.0}  & \textbf{-0.4}   & \textbf{0.0}    & \textbf{-0.1}     & \textbf{0.0}  \\
\bottomrule
\end{tabular}}
\label{tab:hyper_diff}
\end{table}

Further experiments are conducted where we select the optimal $k$ and $\lambda$ for each dataset and compare them with shared hyper-parameters $k$ and $\lambda$. The optimized hyper-parameter values, performance, and accuracy differences are presented in Table~\ref{tab:hyper_diff}. As shown, the differences between the optimal $k$ and $\lambda$ for each dataset and shared value $k=3, \lambda=0.5$ are minimal. Therefore, we believe that our choice for shared $k$ and $\lambda$ can be widely used across datasets.

\subsection{{More Results of Using Different LLMs}}
\label{app:llm}

\begin{table}[h]
\centering
\caption{Results of using other LLMs to generate attributes (using ViT-16-based CLIP as the example).}
\resizebox{0.85\textwidth}{!}{
\begin{tabular}{c|c|ccccc}
\toprule
                        & Baseline: AR & \multicolumn{5}{c}{Ours: AttrVR}                                                                           \\
\midrule
LLMs                    & -            & \makecell{Handcraft\\ Prompts} & \makecell{Phi 3.1\\ Mini 128k} & \makecell{Llama 3.1 \\ Mini} & \makecell{GPT-4o \\Mini}       & \makecell{GPT-3.5 \\Turbo (ours)} \\
\midrule
Open-source             & -            & -                 & Yes               & Yes            & No                & No                            \\
Parameters              & -            & -                 & 3.8B              & 8B             & NA                & NA                            \\
Aircraft (accuracy, \%) & 31.7±0.3     & 33.6±0.8          & 35.5±0.5          & 35.7±0.3       & \textbf{36.8±0.9} & 36.6±0.3                      \\
DTD (accuracy, \%)      & 62.0±0.1     & 63.0±0.7          & 65.1±0.6          & 64.8±0.7       & \textbf{65.9±0.9} & 65.6±0.8                      \\
Flowers (accuracy, \%)  & 85.9±0.7     & 87.5±0.6          & 89.9±0.5          & 90.1±0.1       & 92.7±0.1          & \textbf{92.9±0.4}    \\
\bottomrule
\end{tabular}}
\label{tab:llm}
\end{table}

It is also feasible to replace the GPT-3.5 LLM model used by our AttrVR with smaller or open-source LLMs. Even in scenarios where LLMs are unavailable, the VR training and \textit{knn} selection modules in our AttrVR can still be utilized to obtain optimized handcrafted prompts \citep{radford2021learning} with labels for individual samples, thereby improving baseline performance. Moreover, the attribute generation process by LLMs is independent of training, which means it does not introduce additional training time overhead, and whether the LLM is open-source or not does not affect this aspect.

Table~\ref{tab:llm} shows the impact of different attribute generation methods: (1) baseline method AR (without attributes), (2) handcrafted prompts with labels (without LLMs), (3) LLM Phi 3.1 Mini 128k, (4) LLM Llama 3.1 Mini, (5) LLM GPT-4o Mini, (6) GPT-3.5 Turbo (used in AttrVR).

The following conclusions can be drawn from Table~\ref{tab:llm}:
\begin{itemize}
    \item Even when using only handcrafted prompts with labels without LLM-generated attributes, AttrVR still outperforms the baseline.
    \item Using smaller or open-source models can also achieve significant improvements compared to the baseline. So choosing which LLM to use might not be that important.
    \item Higher-quality LLMs (such as the GPT series) tend to yield slightly better results, making them more suitable for generating attributes.
\end{itemize}

\subsection{{The Case of Generated Attributes with Low Quality}}
\label{app:random}

\begin{table}[h]
\caption{Results of randomly-chosen attributes and knn-chosen (ours) attributes facing attributes with different quality (using ViT-16-based CLIP as the example)}
\resizebox{\textwidth}{!}{
\begin{tabular}{c|cccccc|cccc|cc}
\toprule
                & \multicolumn{6}{c}{\makecell{Normal-quality \\ Attributes}}                                               & \multicolumn{4}{|c|}{\makecell{Low-quality \\ Attributes} }                   & \multicolumn{2}{c}{\makecell{High-quality \\ Attributes}} \\
\midrule
                & Aircraft      & DTD           & Flowers       & Food       & SUN        & IN      & Cars          & ESAT       & UCF        & Resisc        & Caltech            & Pets                  \\
\midrule
AR (Baseline)   & 31.7          & 62.0          & 85.8          & 85.2          & 67.9          & 66.0          & 68.0          & 93.4          & 78.1          & 81.6          & 95.5                  & 92.7                  \\
AttrVR (w rnd)  & 36.4          & 64.3          & 90.5          & 85.6          & 68.3          & 69.0          & 67.8          & 92.4          & 77.6          & 80.9          & \textbf{96.2}         & \textbf{93.6}         \\
AttrVR (w knn) & \textbf{36.6} & \textbf{65.6} & \textbf{92.9} & \textbf{85.9} & \textbf{69.6} & \textbf{69.4} & \textbf{68.3} & \textbf{93.8} & \textbf{79.0} & \textbf{82.6} & 95.7                  & 93.3                 \\
\bottomrule
\end{tabular}}
\label{tab:rnd}
\end{table}

Some of the generated attributes might have low qualities, and may negatively impact model performance. However, since the \textit{knn} attribute selecting module in AttrVR is designed to select the most relevant attributes for each sample, it effectively filters out low-quality ones, ensuring the quality and relevance of the final selected attributes. 

In Table~\ref{tab:rnd}, we present a comparison between attributes randomly selected from the generated set (marked with `w rnd') and those selected using our \textit{knn} module (marked with `w knn'). When the generated attributes are of low quality, it is likely that randomly selecting attributes might yield worse results than not using attributes (e.g. Cars, ESAT, UCF, Resisc dataset). However, the \textit{knn} module in AttrVR effectively selects high-quality and relevant attributes, leading to results that outperform the baseline method. On the contrary, when the attributes are of sufficiently high quality (e.g, Caltech, Pets), randomly selecting attributes may already achieve a high accuracy. Then the advantages of our \textit{knn} module might diminish. In most cases, \textit{knn} module successfully helps to ensure that only attributes with higher quality and relevance will be considered.

\subsection{{More Results of Using Multimodal Large Language Model (MLLM) Instead of LLMs}}
\label{app:vlm}
\begin{table}[h]
\caption{Results of using MLLM instead of LLMs to generate attributes (using ViT-16-based CLIP as the example).}
\centering
\resizebox{0.85\textwidth}{!}{
\begin{tabular}{c|ccc}
\toprule
                         & Aircraft (accuracy, \%) & DTD (accuracy, \%) & Flowers (accuracy, \%) \\
\midrule
AR (Baseline Method)     & 31.7±0.3                & 62.0±0.1           & 85.9±0.7               \\
AttrVR (GPT-4o-mini-LLM) & \textbf{36.8±0.9}       & 65.9±0.9           & 92.7±0.1               \\
AttrVR (GPT-4o-mini-VLM) & 36.6±0.4                & \textbf{68.2±0.3}  & \textbf{93.7±0.4}     \\
\bottomrule
\end{tabular}}
\label{tab:vlm}
\end{table}

The Multimodal Large Language Model (MLLM) can generate attributes and integrate seamlessly into our AttrVR framework by replacing the LLMs.
In this section, the experiment is conducted using the VLM module in GPT-4o-mini to generate attributes per class based on five randomly sampled images with the size of $224 \times 224$ from the training set and the prompts below.

\begin{itemize} \notag
    \item DesAttr: \texttt{This is a photo of} [Class Name]. \texttt{Describe the appearance of the} [Task Info.] [Class Name].
    \item DistAttr: \texttt{This is a photo of} [Class Name]. \texttt{Describe the unique appearance of a/an} [Class Name] \texttt{from the other} [Task Info.].
\end{itemize}
where [Task Info.] represents the description of downstream tasks, shown in Table \ref{tab:data_info}, and [Class Name] represents the name of each target class.

Comparative results with generating attributes using the LLM module in GPT-4o-mini are presented in Table~\ref{tab:vlm}, from which we draw the following conclusions:
\begin{itemize}
    \item VLM-generated attributes yield better results in visually distinctive tasks (e.g., texture in DTD dataset). while their advantage over LLM-generated attributes diminishes in tasks with subtle visual differences (e.g., different models of airplanes in Aircraft dataset).
    \item AttrVR outperforms the baseline in all cases, using either LLM-generated or VLM-generated attributes, demonstrating its robustness regardless of the attribute generation model.
\end{itemize}

\end{document}